\documentclass[runningheads]{llncs}
\usepackage[T1]{fontenc}
\usepackage{graphicx}
\usepackage{hologo} 
\usepackage{amssymb}
\usepackage{amsmath}
\usepackage{url}
\usepackage{hyperref}

\usepackage{enumitem}

\usepackage{lineno}

\usepackage[appendix=strip,bibliography=common]{apxproof}

\usepackage{thmtools, thm-restate}

\usepackage{booktabs} 
\usepackage{makecell} 
\usepackage{multirow} 

\usepackage{caption}
\usepackage{subcaption} 
\usepackage{pgfplots}
\pgfplotsset{compat=1.8}
\usepackage{tikz}
\usetikzlibrary{
  arrows.meta,                        
  backgrounds,                        
  ext.paths.ortho,                    
  ext.positioning-plus,               
  ext.node-families.shapes.geometric, 
  calc,                               
  bending,                            
  positioning,                        
  plotmarks,
  bending,
  quotes,
  patterns,
  shadings,
  decorations.text,
  shapes.symbols}
\tikzstyle{link} = []
\tikzstyle{arrow} = [-{Stealth[length=5pt,round,inset=3pt,width=7pt,flex'=1]},line width=0.5pt, 
quotes/.style={fill=white, inner sep=2pt, rounded corners}]
\tikzstyle{node} = [circle,draw=black, minimum size=7mm, inner sep=0pt]

\usepackage{graphicx}
\usepackage[table]{xcolor}

\def\r{1.3} 
\def\rg{1.5} 
\def\rp{1.6} 

\usepackage[]{todonotes}

\newcommand{\com}[1]{}

\newcommand{\candidates}{\mathcal{C}}
\newcommand{\n}{n}
\newcommand{\m}{m}

\newcommand{\NL}[2]{\text{NL}_{#1}(#2)}

\newcommand{\mssm}{\mbox{MSs\hspace{-\the\fontcharwd\font`s+0.2ex}\textsuperscript{\raisebox{0.4ex}{$-$}}}}  

\newcommand{\mssp}{\mbox{MSs\hspace{-\the\fontcharwd\font`s+0.2ex}\textsuperscript{\raisebox{0.4ex}{$+$}}}}

\newcommand{\X}{\mathcal{X}}
\newcommand{\Y}{\mathcal{Y}}

\DeclareMathOperator{\borda}{BO}
\DeclareMathOperator{\cop}{CO}
\DeclareMathOperator{\tc}{TC}
\DeclareMathOperator{\uc}{UC}
\DeclareMathOperator{\wuc}{wUC}
\DeclareMathOperator{\mm}{MM}

\DeclareMathOperator{\argmin}{argmin}
\DeclareMathOperator{\argmax}{argmax}

\makeatletter
\newcommand{\blfootnote}[1]{%
  \begingroup
  \let\old@makefntext\@makefntext
  \long\def\@makefntext##1{\noindent ##1}%
  \footnotetext{#1}
  \endgroup
}
\makeatother

\begin{document}
%
\title{Characterizing Necessary Losers to Explain Tournaments Solutions}
%
%
\author{Clément Contet\inst{1,2} \and Umberto Grandi\inst{1,3} \and Jérôme Mengin\inst{1,2}}
%
\institute{Institut de Recherche en Informatique de Toulouse (IRIT) \and
Université de Toulouse \and
Université Toulouse Capitole}
\maketitle              
\begin{abstract}

We study the problem of formally explaining why a candidate was not selected by a given tournament rule, by identifying sub-tournaments in which the candidate loses independently of how the rest of the tournament is completed. We define destructive minimal supports as any minimal sub-tournament satisfying this property, which in formal explainable artificial intelligence corresponds to abductive explanations for the question “Why does the loser lose the tournament?”. 
For six common tournament solutions (maximin, uncovered set and its weighted variant, top cycle, Copeland, and Borda) we provide characterizations of when a candidate is either a necessary loser or a possible winner, we determine the size of the smallest destructive minimal supports, complemented by polynomial-time algorithms for their computation except for the case of Borda and Copeland rules which we conjecture to also be polynomial.

\keywords{Computational Social Choice \and Voting \and Explainability.}
\end{abstract}

\section{Introduction}
\blfootnote{This paper is the extended version of Contet, Grandi, Mengin. Characterizing Necessary Losers to Explain Tournaments Solutions. In: Proceedings of the 9th International Conference on Algorithmic Decision Theory (ADT) (2026).} Why do people accept the outcome of decision making processes?
If it seems natural that people follow decisions which are favorable to them, it is less clear to see why they would conform to disadvantageous outcomes.
Procedural justice theory, developed in the late 1990s and early 2000s by Tyler~\cite{tyler1994governing,tyler2000social} and Hibbing and Theiss-Morse~\cite{hibbing2001process}, claims that the willingness to accept and to obey the outcome of a decision making process not only depends on the outcome per se but also on the process that leads to it. If the primary influence on the individual position toward an outcome is its favorability, a process perceived with a higher procedural fairness, i.e., which treats all stakeholders fairly, increases the legitimacy and trustworthiness of the decision, the process in itself and more generally the responsible entity. This in turn leads to more acceptance and conformance with the decision.
More recently, Carman~\cite{carman2010process} showed that this theory holds for participatory democracy, in particular for petitioning systems.
It is important to realize that it is all about \emph{perceived} fairness, as Tyler states: ``When authorities are presenting their decisions to the people influenced by them, they need to make clear that they have listened to and considered the arguments made. They can do so by accounting for their decisions.''~\cite{tyler2000social}.
A practical example of this is the work of Suryanarayana et al.~\cite{SuryanarayanaEtAlAAMAS2022} establishing that providing explanations for election outcomes could increase satisfaction and acceptance of voters, especially for those with less favorable outcomes.
These results as well as numerous voices in the computational social choice community~\cite{GrossiDigitalDem2024} have called for efforts to make collective decision-making processes more accessible to a wider audience by developing tools to explain their inner workings. 
More generally, this movement is in line with the field of \emph{Explainable Artificial Intelligence (XAI)} which aims at making algorithms more understandable to human users (see \cite{dwivedi2023explainable,miller2023explainable} for an introduction).

Building on previous work on transparency \cite{de2014does}, de Fine Licht and de Fine Licht~\cite{de2020artificial} and Grimmelikhuijsen~\cite{grimmelikhuijsen2023explaining} showed that from a social sciences perspective, the basic transparency approach consisting of simply making public all the information to enable anyone to recompute and verify algorithmic decisions is insufficient to foster trust.
\footnote{This body of works distinguishes between \emph{fishbowl transparency}, i.e., full openness of the decision-making process, and \emph{transparency in rationale}, that is public receiving dedicated information for the justification or explanation of the decision. Our notion of \emph{explainability} is associated to the latter.}
For collective decisions, this implies that simply publishing the individual vote counts and the rule used may not be enough.

In the computational social choice literature, multiple methods to specifically explain or justify the outcome of an election have been proposed.
In a classical social choice approach, a first line of works unfolds some logical reasonings based on desirable axiomatic properties to justify the election of one alternative~\cite{boixel2022calculus,CaillouxEndrissAAMAS2016,NardiEtAlAAMAS2022,peters2020explainable}.
A second method presents the stakeholders with manually or automatically curated statistics on the expressed preferences supporting a specific outcome~\cite{SuryanarayanaEtAlAAMAS2022}.
Finally, a third line of work~\cite{contet2024abductive,contet2026explaining} uses abductive reasoning to explain why an alternative is in the winning set.
All approaches have their strengths and weaknesses, however, each only allows to generate explanations supporting why an alternative was selected and not why an alternative was \emph{not} selected. We see this as an important drawback as procedural justice theory has shown that explanations of decisions are particularly important to maintain trust and legitimacy in the case of an adverse outcome~\cite{brockner1996integrative,colquitt2002explaining}.

Inspired by the literature on control and bribery~\cite{elkind2009swap,faliszewski2009llull}, which distinguishes between \emph{constructive bribery} where the goal is to change votes to make a losing candidate elected, and \emph{destructive bribery} where an elected candidate becomes unelected, we introduce the notion of \emph{destructive minimal supports} to answer the question ``Why was this candidate \emph{not} selected?'' with abductive reasoning.
To produce compact explanations for the election outcome, our approach uses tournaments which are a classical structure in social choice to compactly represent pairwise comparisons over a set of possible alternatives such as contending teams in sports, preferences expressed over candidates in an election or possible outcomes in a decision making problem (for an introduction see~\cite{tournamentHandbook2016,weight-tournamentHandbook2016}). 

Destructive minimal supports are partial sub-tournaments of the initial tournament in which the scrutinized losing candidate loses in all possible completions. That is, the candidate is a necessary loser of the sub-tournament, or, equivalently, not a possible winner. 
This problem can be seen as the dual of the Margin of Victory (MoV) problem~\cite{brill2020refining,doring2023margin} where one is looking for minimal changes in the tournament to change the outcome of the election. More generally, this work fits in the literature on the problem of finding necessary and possible winners in elections with partial information~\cite{aziz2015possible,konczak2005voting,lang2020collective,schwartz1966possible,xia2011determining}.

\noindent\textbf{Paper structure.} In Section~\ref{sec:prelim}, we recall classical notions on tournaments. 
In Section~\ref{sec:dms}, we define destructive minimal supports for the losing candidate $l$ for a tournament solution $S$ in a tournament $G$ as the set of inclusion minimal sub-tournaments of $G$ where $l$ is a necessary loser, i.e., loses in all completions of those sub-tournaments.
In Section~\ref{sec:pw}, we characterize partial tournaments where a specific candidate is a necessary loser, or, equivalently, not a possible winner for the following six tournament solutions : the maximin rule, the uncovered set and its weighted variant, the top-cycle, the Copeland rule, and the Borda rule.
Finally, in Section~\ref{sec:dmssize} we study the smallest destructive minimal supports, that is the destructive minimal supports comprised of the smallest amount of pairwise comparisons, we provide explicit formulas and bounds for their size, and we show how to obtain them efficiently.
See Table~\ref{tab:result} for a preview of our results.
\noindent All missing proofs are available in the extended version of this paper~\cite{contet2026characterizing}.

\begin{table}
\begin{minipage}{\textwidth}
\renewcommand{\thempfootnote}{\fnsymbol{mpfootnote}}
    \centering
    \begin{tabular}{ccc}
        \toprule
        \makecell[c]{Tournament\\ Solution} & \makecell[c]{Necessary Loser\\ Characterization} & \makecell[c]{Upper bound\\ on SdMS size} \\
        \midrule
        \makecell[c]{Top Cycle\\ $(\tc)$} & 
        {\makecell[c]{
        $\exists K\subseteq\candidates\setminus{\{l\}}$ s.t. $K\neq \varnothing$,\\$\forall c,c'\in K\times(\candidates\setminus{K})$, $\mu(c,c')=1$.\\
        (Theorem~\ref{th:tcchara})
        }} &
        \makecell[c]{$\left\lceil \frac{\m}{2} \right\rceil\left\lfloor \frac{\m}{2} \right\rfloor$\\ (Theorem~\ref{th:tcsize})}\\
        \midrule
        \makecell[c]{Borda\\ $(\borda)$} &
        {\makecell[c]{
        $\exists K\subseteq\candidates\setminus{\{l\}}$ s.t.\\
        $\displaystyle\min_{G'\in [G]} \mathop{{}\mathbb{E}}_{c\in K} \sigma_{\borda}(c,G') > \max_{G'\in [G]} \sigma_{\borda}(l,G')$.\\
        (Theorem~\ref{th:bochara}, Schwartz~\cite{schwartz1966possible})
        }} & 
        \makecell[c]{$\n(\m - 1)+1$\\ (Theorem~\ref{th:bosize})}\\
        \midrule
        \makecell[c]{Copeland\\ $(\cop)$} &
        {\makecell[c]{
        $\exists K\subseteq\candidates\setminus{\{l\}}$ s.t.\\
        $\displaystyle\min_{G'\in [G]} \mathop{{}\mathbb{E}}_{c\in K} \sigma_{\cop}(c,G') > \max_{G'\in [G]} \sigma_{\cop}(l,G')$.\\
        (Corollary of Theorem~\ref{th:bochara})
        }} & 
        \makecell[c]{$\m$\\ (Corollary~\ref{cor:cosize})}\\
        \midrule
        \makecell[c]{Maximin\\ $(\mm)$} & 
        {\makecell[c]{
        $\exists$ a tree $T=(K,E)$ s.t. $K\subseteq\candidates\setminus{\{l\}}$,\\$\forall c,c'\in K\times(\candidates\setminus{\{l,c\}})$, $\{c,c'\}\not\in E$\\$ \implies \mu(c,c')>\n - \max_{c''\in\candidates} \mu(c'',l)$.\\
        (Theorem~\ref{th:mmchara})
        }} &
        \makecell[c]{$\left\lceil \frac{\n+1}{2} \right\rceil(\m-2)+\n+1$ \\ (Theorem~\ref{th:mmsize})}\\
        \midrule
        \makecell[c]{Weighted\\ Uncovered Set\\ $(\wuc)$} & 
        {\makecell[c]{
        $\exists$ a tree $T=(K,E)$ s.t. $K\subseteq\candidates\setminus{\{l\}}$,\\$\forall c,c'\in K\times(\candidates\setminus{\{l,c\}})$, $\mu(c,l)\geq\frac{\n}{2}$,\\ $\{c,c'\}\not\in E \implies \mu(c,c') + \mu(c',l)\geq\n$.{\footnotesize *}\\
        (Theorem~\ref{th:wucchara})
        }} &
        \makecell[c]{$\n(\m-2)+\left\lceil \frac{\n+1}{2} \right\rceil$\\ (Theorem~\ref{th:wucsize})}\\
        \midrule
        \makecell[c]{Uncovered Set\\ $(\uc)$} & 
        {\makecell[c]{
        $\exists$ a tree $T=(K,E)$ s.t. $K\subseteq\candidates\setminus{\{l\}}$,\\$\forall c,c'\in K\times(\candidates\setminus{\{l,c\}})$, $\mu(c,l)=1$,\\ $\{c,c'\}\not\in E \implies \mu(c,c') + \mu(c',l)\geq1$.{\footnotesize *}\\
        (Corollary of Theorem~\ref{th:wucchara})
        }} &
        \makecell[c]{$\m-1$\\ (Corollary~\ref{cor:ucsize})}\\
        \bottomrule
        \multicolumn{3}{l}{\footnotesize * At least one of all the inequalities has to be strict.}\\
    \end{tabular}
    \caption{Overview of our results for a tournament $G=(\candidates,\mu)$ with $\n$ voters, $\m$ candidates in $\candidates$, and $l$ the losing candidate considered. 
    $\sigma_{\borda}(l,G)$ and $\sigma_{\cop}(l,G)$ are the Borda and Copeland scores of $l$ in $G$. $[G]$ is the set of all completions of $G$. All bounds are tight.}
    \label{tab:result}
\end{minipage}
\end{table}
\section{Preliminaries}\label{sec:prelim}

Let the \emph{number of voters} $n$ be a strictly positive integer. Aziz et al.~\cite{aziz2015possible} define a \emph{partial $n$-weighted tournament} as a pair $G = (\candidates, \mu)$ where $\candidates$ is a nonempty finite set of candidates and  $\mu : \candidates \times \candidates \to \{0, \dots, n\}$ a weight function such that for all distinct $x, y \in \candidates$, $\mu(x, y) + \mu(y, x) \leq n$ and $\mu(x,x)=0$.
A \emph{(complete) $n$-weighted tournament} satisfies for all distinct $x, y \in \candidates$, $\mu(x, y) + \mu(y, x) = n$.
Naturally, an (unweighted) tournament is a $1$-weighted tournament.

\begin{example}\label{ex:tournament}
\begin{figure}
    \centering
    \begin{subfigure}{0.315\linewidth}
        \centering
        \begin{tikzpicture}[
align=center,
node distance=2cm and 2cm,
nodes={align=center},
inner frame sep = 0cm]
    \node [node] (a) at (90:\rp) {$a$};
    \node [node] (b) at (18:\rp) {$b$};
    \node [node] (c) at (-54:\rp) {$c$};
    \node [node] (d) at (-126:\rp) {$d$};
    \node [node] (e) at (-198:\rp) {$e$};

    \draw [arrow] (a) to (b) ;
    \draw [arrow] (a) to (c) ;
    \draw [arrow] (a) to (d) ;
    \draw [arrow] (a) to (e) ;

    \draw [arrow] (b) to (c) ;
    \draw [arrow] (b) to (d) ;
    \draw [arrow] (b) to (e) ;

    \draw [arrow] (c) to (e) ;
    \draw [arrow] (c) to (d) ;
    
    \draw [arrow] (d) to (e) ;
\end{tikzpicture}
        \caption{$G$}\label{fig:exunw}
    \end{subfigure}
    ~
    \begin{subfigure}{0.315\linewidth}
        \centering
        \begin{tikzpicture}[
    align=center,
    node distance=2cm and 2cm,
    nodes={align=center},
    inner frame sep = 0cm]
    \node [node] (a) at (90:\rp) {$a$};
    \node [node] (b) at (18:\rp) {$b$};
    \node [node] (c) at (-54:\rp) {$c$};
    \node [node] (d) at (-126:\rp) {$d$};
    \node [node] (e) at (-198:\rp) {$e$};

    \draw [arrow] (a) to [out=-21, in=129] node[quotes] {$2$} (b) ;
    \draw [arrow] (a) to node[quotes, inner sep=1pt] {$5$} (c) ;
    \draw [arrow] (a) to node[quotes, inner sep=1pt] {$5$} (d) ;
    \draw [arrow] (a) to [out=-129, in=21] node[quotes] {$1$} (e) ;
    
    \draw [arrow] (b) to [out=159, in=-51] node[quotes] {$3$} (a) ;
    \draw [arrow] (b) to [out=-93, in=57] node[quotes] {$3$} (c) ;
    \draw [arrow] (b) to [out=-134, in=26] node[quotes, inner sep=1pt] {$3$} (d) ;
    \draw [arrow] (b) to [out=190, in=-10] node[quotes, inner sep=1pt] {$3$} (e) ;
    
    \draw [arrow] (c) to [out=87, in=-123] node[quotes] {$2$} (b) ;
    \draw [arrow] (c) to [out=195, in=-15] node[quotes] {$3$} (d) ;
    \draw [arrow] (c) to [out=154, in=-46] node[quotes, inner sep=1pt] {$4$} (e) ;
    
    \draw [arrow] (d) to [out=46, in=-154] node[quotes, inner sep=1pt] {$2$} (b) ;
    \draw [arrow] (d) to [out=15, in=165] node[quotes] {$2$} (c) ;
    \draw [arrow] (d) to [out=123, in=-87] node[quotes] {$4$} (e) ;
    
    \draw [arrow] (e) to [out=51, in=-159] node[quotes] {$4$} (a) ;
    \draw [arrow] (e) to [out=10, in=170] node[quotes, inner sep=1pt] {$2$} (b) ;
    \draw [arrow] (e) to [out=-26, in=134] node[quotes, inner sep=1pt] {$1$} (c) ;
    \draw [arrow] (e) to [out=-57, in=93] node[quotes] {$1$} (d) ;
\end{tikzpicture}
        \caption{$G_w$}\label{fig:exw}
    \end{subfigure}
    ~
    \begin{subfigure}{0.315\linewidth}
    \centering
    \begin{tikzpicture}[
align=center,
node distance=2cm and 2cm,
nodes={align=center},
inner frame sep = 0cm]
    \node [node] (a) at (90:\rp) {$a$};
    \node [node] (b) at (18:\rp) {$b$};
    \node [node] (c) at (-54:\rp) {$c$};
    \node [node] (d) at (-126:\rp) {$d$};
    \node [node] (e) at (-198:\rp) {$e$};

    \draw [arrow] (a) to (b) ;
    \draw [arrow] (a) to (c) ;
    \draw [arrow] (a) to (d) ;

    \draw [arrow] (b) to (c) ;
    \draw [arrow] (b) to (d) ;

    \draw [arrow] (c) to (d) ;
\end{tikzpicture}
    \caption{$G_p$}
    \label{fig:nl}
    \end{subfigure}
    \caption{An (unweighted) tournament G (\subref{fig:exunw}), a 5-weighted tournament $G_w$(\subref{fig:exw}), and an (unweighted) partial tournament $G_p$ (\subref{fig:nl}). Throughout this paper, edges with a weight of $0$ are not shown and edge labels are omitted in the unweighted case.}
    \label{fig:tournamentsex}
\end{figure}
In Figure~\ref{fig:exunw}, the edge of $G$ between the pair of alternatives $a$ and $b$ shows that $a$ is preferred to $b$. In Figure~\ref{fig:exw}, the edge from $a$ to $b$ with a weight of $2$ in $G_w$ shows that $a$ is preferred to $b$ by $2$ voters. Note also that $G_p$ is a sub-tournament of $G$ as all edges in $G_p$ are in $G$.

\end{example}

Given a tournament $G$ it is then possible to select a set of desirable candidates using a \emph{tournament solution} $S$ (also called tournaments rule) which is a function taking a tournament as input and returning the winners set as a non-empty subset of candidates noted $S(G)$.
We focus on the following set of classical tournament solutions.

\begin{itemize}
    \item The \emph{top cycle} ($\tc$) is the unique minimal dominant nonempty subset of candidates of an unweighted tournament $G$, where a nonempty subset of candidates $A \subseteq \candidates$ is called \emph{dominant} in $G=(\candidates,\mu)$ if for each candidates $x\in A$ and $y\in  \candidates\setminus{A}$, $\mu(x,y)=1$.
    Alternatively, $\tc$ can also be defined as the set of candidates that can reach every other candidate via a directed path in $G$ or, in terms of graph theory, as the unique (since $G$ is complete) top strongly connected component of $G$. 
    \item The \emph{Borda score} of a candidate $c\in\candidates$ in a weighted tournament $G=(\candidates, \mu)$ is $\sigma_{\borda}(c,G) = \sum_{c'\in\candidates}\mu(c,c')$. The \emph{Borda rule} ($\borda$) selects candidates that have a maximal Borda score.
    \item The \emph{Copeland score} of a candidate $c\in\candidates$ in an unweighted tournament $G = (\candidates, \mu)$ is $\sigma_{\cop}(c,G) = |\{c':c'\in \candidates, \mu(c,c')=1\}|$. The \emph{Copeland rule} ($\cop$) selects candidates that have a maximal Copeland score. It can be seen as the restriction of $\borda$ to unweighted tournaments.
    \item The \emph{maximin score} of a candidate $c\in\candidates$ in a weighted tournament $G=(\candidates, \mu)$ is $\sigma_{\mm}(c,G) = \min_{c'\in \candidates\setminus{\{c\}}} \mu(c,c')$. The \emph{maximin rule} ($\mm$) selects candidates that have a maximal maximin score.
    \item The \emph{weighted uncovered set} ($\wuc$) is the nonempty subset of candidates that are not strictly weighted covered by any other candidate in a weighted tournament $G=(\candidates,\mu)$ where a candidate $x\in\candidates$ is said to strictly \emph{cover} another candidate $y\in\candidates$ if for all $z\in\candidates$, $\mu(x,z)\geq \mu(y,z)$ and at least one inequality is strict. For unweighted tournaments, we simply talk about the \emph{uncovered set} ($\uc$).
\end{itemize}

%

\section{(Smallest) Destructive Minimal Supports for Tournaments}\label{sec:dms}

To explain why a candidate $c$ loses a tournament $G$, abductive reasoning extracts a subset minimal set of features of the tournament $G$ ensuring that the scrutinized candidate loses. We call destructive minimal supports such minimal sub-tournaments of $G$ where $c$ loses independently of the rest of the tournament.
Given two partial tournaments $G=(\candidates,\mu)$ and $G'=(\candidates,\mu')$, we say that $G'$ is an \emph{extension} of $G$ denoted $G \subseteq G'$ if for all distinct $x, y \in \candidates$, $\mu(x,y) \leq \mu'(x,y)$ (for instance $G_p\subseteq G$ in Example~\ref{ex:tournament}). 
We refer to $[G]$ as the set of all complete extensions of $G$.


We introduce the notion of \emph{necessary losers} to formalize this idea of a candidate losing in all completions of a partial tournament. This definition can be viewed as a parallel to the notion of \emph{necessary winners}, i.e., candidates winning no matter what, and is equivalent to not being a \emph{possible winner} or a candidate that can win in at least one completion (see Konczak and Lang~\cite{konczak2005voting}).

\begin{definition}\label{def:nl}
    Given $\n$ voters, a partial $\n$-weighted tournament $G$, a tournament solution $S$  and a candidate $c \in \candidates$, $c$ is a \emph{necessary loser} for $G$ (w.r.t $S$) if for every completion $G'\in [G]$ we have that $c \not \in S(G')$. We write $c \in \NL{S}{G}$.
\end{definition}

\begin{example}\label{ex:partial}
Consider the partial tournament in Figure~\ref{fig:nl} and the Copeland rule. Since $a$ already secured $3$ wins, $c$ and $d$ cannot catch back as their Copeland score is respectively $1$ and $0$ and they can only improve it by $1$ by being preferred to $e$. On the contrary, $a$ can win, typically if $e$ is not preferred to him. $b$ can win if $e$ is preferred to $a$ and $b$ is preferred to $e$. $e$ can win as it could be a Condorcet winner for instance. Hence, $c$ and $d$ are the necessary Copeland-losers for $G$.
\end{example}

We now define destructive minimal supports for tournaments.

\begin{definition}\label{def:dms}
Given $\n$ voters, an $\n$-weighted tournament $G = (\candidates, \mu)$, a tournament solution $S$ and a losing candidate $l \in \candidates\setminus{S(G)}$, a \emph{destructive minimal support (dMS)} for $l \not\in S(G)$ is a partial tournament $G'\subseteq G$ such that: 
\begin{enumerate}
    \item[(a)]  $l \in \NL{S}{G'}$
    \item[(b)] $G'$ is $\subseteq$-minimal, i.e., all partial tournaments $G''\subsetneq G'$ are such that $l \not\in \NL{S}{G''}$.
\end{enumerate} 
\end{definition}

\begin{example}\label{ex:dms}
\begin{figure}
    \centering
    \begin{subfigure}{0.315\linewidth}
        \centering
        \begin{tikzpicture}[
align=center,
node distance=2cm and 2cm,
nodes={align=center},
inner frame sep = 0cm]
    \node [node] (a) at (90:\rp) {$a$};
    \node [node] (b) at (18:\rp) {$b$};
    \node [node] (c) at (-54:\rp) {$c$};
    \node [node] (d) at (-126:\rp) {$d$};
    \node [node] (e) at (-198:\rp) {$e$};

    \draw [arrow] (a) to (b) ;
    \draw [arrow] (a) to (c) ;
    \draw [arrow] (a) to (d) ;

    \draw [arrow] (b) to (c) ;
    \draw [arrow] (b) to (d) ;
    \draw [arrow] (b) to (e) ;

    \draw [arrow] (c) to (d) ;
    \draw [arrow] (c) to (e) ;

    \draw [arrow] (d) to (e) ;

    \draw [arrow] (e) to (a) ;
\end{tikzpicture}
        \caption{$G$}\label{fig:dms1}
    \end{subfigure}
    ~
    \begin{subfigure}{0.315\linewidth}
        \centering
        \begin{tikzpicture}[
align=center,
node distance=2cm and 2cm,
nodes={align=center},
inner frame sep = 0cm]
    \node [node] (a) at (90:\rp) {$a$};
    \node [node] (b) at (18:\rp) {$b$};
    \node [node] (c) at (-54:\rp) {$c$};
    \node [node] (d) at (-126:\rp) {$d$};
    \node [node] (e) at (-198:\rp) {$e$};

    \draw [arrow] (a) to (b) ;
    \draw [arrow] (a) to (c) ;
    \draw [arrow] (a) to (d) ;

    \draw [arrow] (c) to (d) ;
\end{tikzpicture}
        \caption{$\X$}\label{fig:dms2}
    \end{subfigure}
    ~
    \begin{subfigure}{0.315\linewidth}
        \centering
        \begin{tikzpicture}[
align=center,
node distance=2cm and 2cm,
nodes={align=center},
inner frame sep = 0cm]
    \node [node] (a) at (90:\rp) {$a$};
    \node [node] (b) at (18:\rp) {$b$};
    \node [node] (c) at (-54:\rp) {$c$};
    \node [node] (d) at (-126:\rp) {$d$};
    \node [node] (e) at (-198:\rp) {$e$};

    \draw [arrow] (a) to (d) ;
    \draw [arrow] (b) to (d) ;
    \draw [arrow] (c) to (d) ;
\end{tikzpicture}
        \caption{$\Y$}\label{fig:dms3}
    \end{subfigure}
    \caption{A tournament $G$ (\subref{fig:dms1}), and $\X$ and $\Y$, two dMSs for $d\not\in\cop(G)$ (\subref{fig:dms2}) and (\subref{fig:dms3}).}
    \label{fig:dms}
\end{figure}
Consider the Copeland rule and the tournament $G$ in Figure~\ref{fig:dms1}. Clearly, $c\not\in\cop(G)$. To produce a dMS for $d\not\in\cop(G)$ like $\X$ in Figure~\ref{fig:dms2}, one has to find a partial sub-tournament of $G$ such that there always exists a candidate with a better Copeland score than $d$ independently of the way $G$ is completed. In $\X$, this is achieved by using a mix of pairwise comparisons lost by $d$ and won by a stronger candidate $a$. Indeed, at best $d$ can achieve a score of $2$ while $a$ at least a score of $3$. This sub-tournament is subset minimal as removing a defeat from $d$ breaks the previous reasoning. The case of $\Y$ in Figure~\ref{fig:dms3} is similar. Here it is sufficient to attribute $3$ losses to $d$ as it is impossible to complete the rest of the tournament without giving at least a score of $2$ to another candidate. $d$ loses a strict majority of its comparisons and there always exists a candidate in the tournament that wins a majority of them by the pigeonhole principle.
\end{example}

As seen in the previous example, there can exist multiple dMSs for a given losing candidate. We focus on the simpler ones in line with Grice's manner criterion~\cite{grice1975logic} which suggests to pick the briefest explanation. To measure the dMSs size, we use total number of pairwise comparisons in the partial tournament in line with Contet et al.~\cite{contet2026explaining} and the microbribery setting of Faliszewski et al.~\cite{faliszewski2009llull}.

\begin{definition}\label{def:sdms}
    Given $n$ voters, an $n${-}weighted tournament $G{=}(\candidates,\mu)$, and a losing candidate $l \in \candidates\setminus{S(G)}$, we define the size of a dMS $\X{=}(\candidates,\mu_\X)$ for $l \not\in S(G)$ as $|\X| = \sum_{(c,c') \in \candidates^2} \mu_\X(c,c')$. $\X$ is a \emph{smallest destructive minimal support (SdMS)} for $l \not\in S(G)$ if and only if for all dMSs $\Y$ for $l \not\in S(G)$ we have $|\X| \leq |\Y|$.
\end{definition}

\section{Characterizing Necessary Losers}\label{sec:pw}

\subsection{Top Cycle}\label{sec:pwtc}

The characterization of top cycle is relatively straightforward. Essentially, a candidate $l$ can be a necessary loser if and only if there always exists a strongly connected component inaccessible from $l$ in each completion. For it to be the case, a ``one-way frontier'' has to split the initial partial tournament in two.

\begin{restatable}{theorem}{tcchara}\label{th:tcchara}
    Given a partial tournament $G =(\candidates,\mu)$, a candidate $l \in \candidates$ is a necessary $\tc$-loser for $G$ if and only if there exists $K\subseteq\candidates\setminus{\{l\}}$ with $K\neq \varnothing$ such that for each pair $(c,c')\in K\times(\candidates\setminus{K})$, $\mu(c,c')=1$.
\end{restatable}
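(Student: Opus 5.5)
We prove the two directions separately, using the characterization of $\tc(G')$ for a complete tournament $G'$ as the unique minimal dominant set, equivalently the set of candidates that reach every other candidate by a directed path.

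\emph{Sufficiency.} Suppose $K\neq\varnothing$, $l\notin K$, and $\mu(c,c')=1$ for all $(c,c')\in K\times(\candidates\setminus K)$. Any completion $G'\in[G]$ extends $\mu$, so these edges remain in $G'$. Hence $K$ is dominant in $G'$. The top cycle is the unique minimal dominant set, and the dominant sets of a complete tournament are nested (they are the unions of an initial segment of the condensation order). Therefore $\tc(G')\subseteq K$. Alternatively, $l$ has no path to any vertex of $K$, because every edge between $K$ and its complement points out of $K$. Either way $l\notin\tc(G')$, so $l\in\NL{\tc}{G}$.

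\emph{Necessity, by contraposition.} Assume no such $K$ exists. We build a completion $G^*$ in which $l\in\tc(G^*)$. The natural completion orients every undecided pair ($\mu(x,y)=\mu(y,x)=0$) so as to help $l$ reach everyone. The simplest choice is to orient every undecided pair involving $l$ away from $l$, and to orient the remaining undecided pairs arbitrarily. A cleaner choice is to orient every undecided pair in \emph{both} directions conceptually, that is, to consider the digraph $D$ containing an arc $x\to y$ whenever $\mu(y,x)=0$ ("$x$ is not known to lose to $y$"). Let $R$ be the set of vertices reachable from $l$ in $D$. If $R\neq\candidates$, then $K=\candidates\setminus R$ is nonempty and excludes $l$. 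Moreover, every pair $(c,c')$ with $c'\in R$ and $c\in K$ has no $D$-arc $c'\to c$, so $\mu(c,c')=1$. This contradicts the assumption, so $R=\candidates$.

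The remaining step, which is the main obstacle, is converting reachability in $D$ (where undecided pairs count both ways) into reachability in a single completion. Take a BFS/out-branching tree of $D$ rooted at $l$. Each tree arc $x\to y$ is either a fixed edge of $G$ with $\mu(x,y)=1$, or an undecided pair. Orient each undecided tree arc as $x\to y$. This is consistent because a tree never uses both orientations of the same pair: each vertex has a single parent, and the pair $\{x,y\}$ appears at most once in the tree. Orient all other undecided pairs arbitrarily. The resulting completion $G^*$ contains the whole tree, so $l$ reaches every candidate, and $l\in\tc(G^*)$. Thus $l$ is not a necessary loser, which completes the contrapositive.
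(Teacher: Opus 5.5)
Your proof is correct. The sufficiency direction matches the paper's, but for necessity you take a genuinely different route.

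The paper argues extremally within the set of completions:
\begin{enumerate}
\item It fixes a completion $\X_0$ minimizing the set $A_{\X_0}$ of candidates unreachable from $l$.
\item It applies the hypothesis to $K=A_{\X_0}$, obtaining a pair $(c,c')$ with $c$ unreachable, $c'$ reachable and $\mu(c,c')=0$.
\item It flips that one edge to $c'\to c$. No path from $l$ used the old edge, since its tail $c$ was unreachable. So everything stays reachable and $c$ is added, contradicting minimality.
\end{enumerate}
Since it only ever modifies genuine completions, no consistency issue arises.

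You instead pass to the relaxed digraph $D$, in which undecided pairs count in both directions. The complement of the $D$-reachable set of $l$ is itself a witness $K$. In fact it is the largest one, since no $D$-arc can enter any witness. When $D$-reachability is total, you realize it in a single completion by orienting undecided pairs along an out-branching. You correctly note that an arborescence never contains both $x\to y$ and $y\to x$.

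What your version buys is constructiveness. A single graph search either returns the maximal witness $K$ or an explicit completion with $l\in\tc$, which gives a linear-time test for necessary $\tc$-losers. The paper's argument is shorter but only existential, though it can be unrolled into at most $|\candidates|-1$ improving flips.

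One small fix: drop the ``simplest choice'' you mention first (orient undecided pairs at $l$ away from $l$, the rest arbitrarily), because it fails in general. Take $\mu(b,l)=1$ with $\{l,a\}$ and $\{a,b\}$ undecided. Choosing $b\to a$ leaves $b$ unreachable, although $l$ is not a necessary loser.
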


\subsection{Borda}\label{sec:pwbo}
 
Schwartz proved the characterization for the Borda rule in the 1960s~\cite{schwartz1966possible}. It was originally expressed with flow networks so we translate it here in our framework.
 
 \begin{restatable}{theorem}{bochara}\label{th:bochara}
    \textbf{Schwartz~\cite{schwartz1966possible}.} Given $\n$ voters and a partial $\n$-weighted tournament $G =(\candidates,\mu)$ with $|\candidates| = \m$, a candidate $l \in \candidates$ is a necessary Borda loser for $G$ if and only if there exists $K \subseteq \candidates\setminus{\{l\}}$ such that
    $$\min_{G'\in [G]} \mathop{{}\mathbb{E}}_{c\in K} \sigma_{\borda}(c,G') > \max_{G'\in [G]} \sigma_{\borda}(l,G').$$
\end{restatable}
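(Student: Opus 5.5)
The argument runs in two directions. First, the maximum of $l$'s score is attained by a single natural completion, and the minimum of the average over $K$ has a closed form. Then we use a max-flow/min-cut argument in the spirit of Schwartz.

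Let $G^l$ be the completion that gives $l$ all undecided weight against every other candidate, that is $\mu^l(l,c)=\n-\mu(c,l)$. Then $\max_{G'\in[G]}\sigma_{\borda}(l,G')=\sigma_{\borda}(l,G^l)=:s_l$. This holds because the Borda score of $l$ depends only on the pairs involving $l$.

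For the average over $K$, write $\sum_{c\in K}\sigma_{\borda}(c,G')$ as the sum of two parts. The internal part is $\sum_{c\neq c'\in K}\mu'(c,c')=\n\binom{|K|}{2}$, which is the same in every completion. The external part is $\sum_{c\in K,c'\notin K}\mu'(c,c')$, which is minimised by giving all undecided weight on $K\times(\candidates\setminus K)$ to the outside. Hence
\[
\min_{G'\in[G]}\sum_{c\in K}\sigma_{\borda}(c,G')=\n\binom{|K|}{2}+\sum_{c\in K,c'\notin K}\mu(c,c'),
\]
and the condition of the statement reads $\n\binom{|K|}{2}+\sum_{c\in K,c'\notin K}\mu(c,c')>|K|\,s_l$.

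The ``if'' direction is immediate. In any completion $G'$, the average Borda score over $K$ is strictly larger than $s_l\geq\sigma_{\borda}(l,G')$. So some $c\in K$ has a strictly larger score than $l$, and $l\notin\borda(G')$.

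For ``only if'', assume no such $K$ exists; we build a completion in which $l$ wins. Fix $l$'s pairs as in $G^l$. It then suffices to distribute the undecided weight on pairs $\{c,c'\}\subseteq\candidates\setminus\{l\}$ so that every $c\neq l$ ends with score at most $s_l$. This is a transportation problem, set up as a flow network:
\begin{enumerate}
\item a source has an arc to each pair node $\{c,c'\}$, with capacity $\n-\mu(c,c')-\mu(c',c)$;
\item each pair node has arcs to its two candidate nodes $c$ and $c'$, with infinite capacity;
\item each candidate node $c$ has an arc to the sink, with capacity $s_l-\sigma_{\borda}(c,G^l_{\mathrm{partial}})$, where the score counts the fixed weights of $G$ together with $c$'s result against $l$ in $G^l$.
\end{enumerate}
If some capacity in step 3 is negative, $K=\{c\}$ already violates the assumption, since $\n\binom{1}{2}=0$. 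A completion in which $l$ wins exists iff the max flow saturates all source arcs. By integrality, an integral flow gives an actual completion.

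By max-flow min-cut, the source arcs fail to saturate iff there is a set $K$ of candidate nodes such that the undecided weight on pairs inside $K$ exceeds the total sink capacity of $K$. The cut must put a pair node on the sink side whenever one of its endpoints is outside $K$, because the middle arcs have infinite capacity. Writing this out, the total fixed-plus-undecided weight inside $K$ is $\n\binom{|K|}{2}$, and the fixed outward weights give $\sum_{c\in K,c'\notin K}\mu(c,c')$. The inequality then rearranges to exactly the displayed condition.

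The main obstacle is this cut computation. One has to check carefully that the minimum cut has the required form, and then do the bookkeeping that turns the Hall-type deficiency into the expectation form, including the ties versus strictness and the integer capacities.
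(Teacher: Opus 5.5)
Your proof is correct, and it follows a different route from the paper's. The paper does not reprove the result. It quotes Schwartz's original sports-elimination criterion $N(l,K)-kP(l)-P(K)>0$, stated in terms of ``half games behind'' and games remaining, as a black box. Its own work is an algebraic translation of that criterion into
\[
\sum_{c\in K}\sum_{c'\notin K}\mu(c,c')+k\sum_{c}\mu(c,l)>k\n(\m-1)-\n\tfrac{k(k-1)}{2},
\]
plus the same two closed forms you derive for $\max_{G'}\sigma_{\borda}(l,G')$ and $\min_{G'}\mathbb{E}_{c\in K}\sigma_{\borda}(c,G')$.

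You instead give a self-contained proof. Sufficiency comes from the averaging argument. Necessity comes from a flow network built directly on the partial $\n$-weighted tournament, which is essentially Schwartz's own argument transplanted into this setting. Your cut computation checks out:
\begin{itemize}
\item In a finite cut whose source-side candidate set is $K$, only pair nodes with both endpoints in $K$ can sit on the source side.
\item The fixed internal weights $\sum_{c\neq c'\in K}\mu(c,c')$ cancel from both sides.
\item What remains is exactly $\n\binom{k}{2}+\sum_{c\in K,c'\notin K}\mu(c,c')>k\,s_l$. Since $l\notin K$, the terms $\mu(c,l)$ are correctly counted in the outward sum.
\end{itemize}

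The paper's route is shorter and credits the literature. However, it asks the reader to trust the dictionary between Schwartz's formulation and weighted tournaments.

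Your route needs no such dictionary and makes the role of integrality explicit. Replacing $\infty$ by $\n$ on the middle arcs changes nothing, and an integral maximum flow then yields a genuine integer completion. It also gives a polynomial-time max-flow procedure that decides necessary losership and reads off a witnessing $K$ from a minimum cut. This is relevant to the paper's later open question about computing smallest destructive minimal supports for Borda and Copeland.
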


\subsection{Maximin and Weighted Uncovered Set}\label{sec:pwmmwuc}

In this section we provide characterizations of partial tournaments where a specific candidate $l$ is a necessary loser for the maximin rule and the weighted uncovered set rule. 
We proceed by reducing the problem of determining if $l$ is a necessary loser in a tournament $G$ to the existence of a perfect matching in a specific bipartite graph that we call the $l$-deficient incidence graph of $G$. This approach generalizes an idea used by Aziz et al. to prove that the possible winner problem for maximin is in P (Theorem 11 in~\cite{aziz2015possible}). We then use Hall's marriage theorem~\cite{hall1987representatives} which characterizes bipartite graphs admitting perfect matchings to characterize partial tournaments where $l$ is a necessary loser.

Let $B=(X,Y,Z)$ be a bipartite graph where $X$ and $Y$ are vertex sets and $Z$ the edge set.
For each subset $W\subseteq X$, the \emph{neighborhood} of $W$ in $B$, noted $N_{B}(W)$ is the subset of vertices in $Y$ adjacent in $B$ to vertices in $W$.
An \emph{\mbox{$X$-perfect} matching} (or simply a \emph{perfect matching}) is a subset of edges $M\subseteq Z$ such that each vertex of $X$ is adjacent to exactly one edge in $M$.

We distinguish the directed edge from $x$ to $y$, noted with parentheses $(x,y)$, from the (undirected) edge between $x$ and $y$, noted with braces $\{x,y\}$.

\begin{definition}
    Given a directed graph $G=(V,E)$ where $V$ is the set of vertices, $E$ the set of edges and a vertex $v\in V$, we call the \emph{$v$-deficient incidence graph} of $G$ the bipartite graph $I_G^v=(X,Y,Z)$ where $X=V\setminus{\{v\}}$, $Y = \{\{x,y\} : x,y\in V^2,\,(x,y)\in E\}$ and $Z=\{\{x,\{x,y\}\}: x,y\in V^2,\, (x,y)\in E\}$, i.e., a vertex $x\neq v$ is adjacent to an edge $e$ in $I_G^v$ if and only if $e$ is an edge leaving $x$ in $G$.
\end{definition}



\begin{example}\label{ex:incidence}
\begin{figure}
    \centering
    \begin{subfigure}{0.6\linewidth}
        \centering
        \begin{tikzpicture}[
align=center,
node distance=2cm and 2cm,
nodes={align=center},
inner frame sep = 0cm]
    \node [node] (a) at (90:\rg) {$a$};
    \node [node] (b) at ($(a) + (-60:\rg)$) {$b$};
    \node [node] (c) at ($(a) + (-120:\rg)$) {$c$};
    \draw [arrow] (a) to node {} (b) ;
    \draw [arrow] (b) to node {} (c) ;
    \draw [arrow] (c) to node {} (a) ;
    
    \node [node] (e) at (2.2,-1) {$e$};
    \node [node] (d) at ($(e) + (90:\rg)$) {$d$};
    \node [node] (f) at ($(e) + (-30:\rg)$) {$f$};
    \node [node] (g) at ($(e) + (210:\rg)$) {$g$};
    \draw [arrow] (e) to [out=80, in=-80] node {} (d) ;
    \draw [arrow] (d) to [out=-100, in=100] node {} (e) ;
    \draw [arrow] (e) to [out=-40, in=160] node {} (f) ;
    \draw [arrow] (f) to [out=140, in=-20] node {} (e) ;
    \draw [arrow] (e) to  [out=-160, in=40] node {} (g) ;
    \draw [arrow] (g) to  [out=20, in=-140] node {} (e) ;

    \node [node] (i) at (4.5,0.5) {$i$};
    \node [node] (h) at ($(i) + (140:\rg)$) {$h$};
    \node [node] (j) at ($(i) + (-40:\rg)$) {$j$};
    \draw [arrow] (i) to [out=130, in=-30] node {} (h) ;
    \draw [arrow] (h) to [out=-50, in=150] node {} (i) ;
    \draw [arrow] (i) to node {} (j) ;
\end{tikzpicture}
        \caption{$G$}\label{fig:ig1}
    \end{subfigure}
    ~
    \begin{subfigure}{0.3\linewidth}
        \centering
        \begin{tikzpicture}[
    align=center,
    node distance=0.cm and 1.0cm,
    inner frame sep = 0cm]
    \node (a) {$a$};
    \node [below= of a] (b) {$b$};
    \node [below= of b] (c) {$c$};
    \node [below= of c] (d) {$d$};
    \node [below= of d] (e) {$e$};
    \node [below= of e] (f) {$f$};
    \node [below= of f] (g) {$g$};
    \node [below= of g] (h) {$h$};
    \node [below= of h] (i) {$i$};

    \node [right= of a] (ab) {$\{a,b\}$};
    \node at ($(b-|ab)$) (ac) {$\{a,c\}$};
    \node at ($(c-|ab)$) (bc) {$\{b,c\}$};
    
    \coordinate (tempde) at ($(d)!.5!(e)$);
    \node at ($(tempde-|ab)$) (de) {$\{d,e\}$};
    \coordinate (tempef) at ($(e)!.5!(f)$);
    \node at ($(tempef-|ab)$) (ef) {$\{e,f\}$};
    \coordinate (tempeg) at ($(f)!.5!(g)$);
    \node at ($(tempeg-|ab)$) (eg) {$\{e,g\}$};

    \node at ($(h-|ab)$) (hi) {$\{h,i\}$};
    \node at ($(i-|ab)$) (ij) {$\{i,j\}$};

    \draw (a) to (ab);
    \draw (b) to (bc);
    \draw (c) to (ac);
    \draw (d) to (de);
    \draw (e) to (de);
    \draw (e) to (ef);
    \draw (e) to (eg);
    \draw (f) to (ef);
    \draw (g) to (eg);
    \draw (h) to (hi);
    \draw (i) to (hi);
    \draw (i) to (ij);
\end{tikzpicture}
        \caption{$I_G^j$}\label{fig:ig2}
    \end{subfigure}
    \caption{A directed graph $G$ (\subref{fig:ig1}) and its $j$-deficient incidence graph $I_G^j$ (\subref{fig:ig2}).}
    \label{fig:ig}
\end{figure}
    
Consider the graph $G=(V,E)$ in Figure~\ref{fig:ig1}. To build its $j$-deficient incidence graph $I_G^j=(X,Y,Z)$ in Figure~\ref{fig:ig2} simply take the original set of vertices without $j$, $X=V\setminus{\{j\}}=\{a,b,c,d,e,f,g,h,i\}$, add each pair of vertices containing an edge in the original graph $Y=\{\{x,y\} :x,y\in V^2,\,(x,y)\in E\}$ and then link vertices with their out-going edges in $G$.
\end{example}

We identify graphs whose deficient incidence graph has no perfect matching.

\begin{restatable}{lemma}{perfectmatchingcharadef}\label{lem:perfectmatchingcharadef}
    Given a directed graph $G=(V,E)$ where $V$ is the set of vertices and $E$ the set of edges, and a vertex $v\in V$, the $v$-deficient incidence graph of $G$, $I_G^v=(X,Y,Z)$, 
    does not have an $X$-perfect matching if and only if there exists a tree $T=(K,E_T)$ with $K\subseteq V\setminus{\{v\}}$ such that for each pair $(c,c')\in K\times V$ with $c\neq c'$, if $\{c,c'\}\not\in E_T$ then $(c,c')\not\in E$.
\end{restatable}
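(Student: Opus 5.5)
The plan is to apply Hall's marriage theorem to $I_G^v=(X,Y,Z)$: it has no $X$-perfect matching if and only if some $W\subseteq X$ satisfies $|N_{I_G^v}(W)|<|W|$. Both directions then amount to translating between such a Hall violator and the tree in the statement. The key observation is that, for $W\subseteq X$, $N_{I_G^v}(W)$ is exactly the set of unordered pairs $\{c,c'\}$ such that $c\in W$ and $(c,c')\in E$.

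\emph{($\Leftarrow$)} Suppose a tree $T=(K,E_T)$ as in the statement exists. Take $W=K\subseteq V\setminus\{v\}=X$. By the condition, every edge $(c,c')\in E$ with $c\in K$ satisfies $\{c,c'\}\in E_T$. Hence $N_{I_G^v}(K)\subseteq E_T$, which gives
\[
|N_{I_G^v}(K)|\le |E_T|=|K|-1<|K|.
\]
By Hall's theorem there is no $X$-perfect matching. The degenerate case $K=\{c\}$ is a tree with no edges, so the same argument applies and $c$ has no out-edges.

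\emph{($\Rightarrow$)} By Hall's theorem, pick $W\subseteq X$ with $|N(W)|<|W|$. Form the undirected simple graph $H$ whose edge set is $N(W)$ and whose vertex set is $W$ together with all endpoints of pairs in $N(W)$; these endpoints may lie outside $W$ and may include $v$. Decompose $H$ into connected components $C_1,\dots,C_k$, and let $W_i=W\cap V(C_i)$ and $N_i=E(C_i)$. Every element of $W$ lies in some component, so $\sum_i|N_i|=|N(W)|<|W|=\sum_i|W_i|$. Hence some component $C$ satisfies $|E(C)|<|W\cap V(C)|\le|V(C)|$.

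For that component $C$, connectivity forces $|E(C)|\ge|V(C)|-1$. Combined with the strict inequality above, we get $|E(C)|=|V(C)|-1$ and $|W\cap V(C)|=|V(C)|$. So $C$ is a tree and $V(C)\subseteq W\subseteq V\setminus\{v\}$. Set $K=V(C)$ and $E_T=E(C)$.

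It remains to check the condition of the statement. Let $c\in K$ and suppose $(c,c')\in E$. Since $c\in W$, the pair $\{c,c'\}$ belongs to $N(W)$, so it is an edge of $H$ incident to $c$. It therefore lies in the component of $c$, namely $C$, so $\{c,c'\}\in E_T$. This is exactly the contrapositive of the required implication.

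The main obstacle is this converse direction. A raw Hall violator need not be closed, because its out-edges can point outside $W$, possibly to $v$. The component-counting step is what lets us select a part of $W$ that is both closed under out-edges and tree-shaped. Some care is needed to treat $Y$ as a set of unordered pairs: two opposite arcs $(x,y)$ and $(y,x)$ give a single vertex of $Y$, which matches the simple graph $H$ used in the counting.
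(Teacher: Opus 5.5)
Your proof is correct and follows essentially the same route as the paper: Hall's theorem in both directions, together with a component-counting argument on a Hall violator $W$ that extracts a tree-shaped subset closed under out-edges. Your version is in fact more careful than the paper's. The paper treats $(W,N_{I_G^v}(W))$ as a graph on $W$ without noting that pairs in $N_{I_G^v}(W)$ may have an endpoint outside $W$ (possibly $v$). Your choice to include those endpoints in $H$, followed by the squeeze $|V(C)|-1\le|E(C)|<|W\cap V(C)|\le|V(C)|$, is exactly what closes that gap.
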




\begin{proof}
Given a directed graph $G=(V,E)$ where $V$ is the set of vertices and $E$ the set of edges, a vertex $v\in V$ and the $v$-deficient incidence graph of $G$ $I_G^v=(X,Y,Z)$.
Given a subgraph $S\subseteq G$, let $V_S$ be the set of vertices contained in $S$ and $E_S$ the set of edges. We define the undirected version of $S$, $\widetilde{S}=(V_S, \widetilde{E_S})$ where $\widetilde{E_S}=\{\{c,c'\}:c,c'\in V_S^2,\, (c,c')\in E_S\}$.
To prove Lemma~\ref{lem:perfectmatchingcharadef}, we use a classical result from matching theory.
\smallskip

\noindent\textbf{Hall's marriage theorem~\cite{hall1987representatives}.}
Let $G=(X,Y,Z)$ be a finite bipartite graph with bipartite sets $X$ and $Y$ and edge set $Z$. 
An $X$-perfect matching exists if and only if for each subset $W\subseteq X$, $|W|\leq |N_G(W)|$.
\smallskip

($\implies$) Suppose $I_G^v$ has no $X$-perfect matching.
Then, according to Hall's marriage theorem, there exists a subset $W\subseteq X$, $|W|> |N_{I_G^v}(W)|$.
Given $T=(W,N_{I_G^v}(W))$,
if no connected component of $T$ is a tree then each connected component has more edges than vertices and $|W| \leq |N_{I_G^v}(W)|$.
This is not the case so exists $W'\subseteq W$ such that $T'=(W',N_{I_G^v}(W'))$ is a tree.
Since $W'\subseteq W$, and $v\not \in W$, $v\not \in W'$.
By definition of $I_G^v$, for each pair $(c,c')\in W'\times V$ with $c\neq c'$, if $\{c,c'\}\not\in N_{I_G^v}(W')$ then, since $c\in W'$, $(c,c')\not\in E$.

($\impliedby$) Suppose there exists a tree $T=(K,E_K)$ with $K\subseteq V\setminus{\{v\}}$ such that for each pair $(c,c')\in K\times V$ with $c\neq c'$, if $\{c,c'\}\not\in E_T$ then $(c,c')\not\in E$.
As $T$ is connected and acyclic $|K| = |\widetilde{E_K}| + 1$ and 
since $v\not\in K$ and $T$ is connected, $\widetilde{E_K}=N_{I_G^v}(K)$. 
Thus, $|K| = |N_{I_G^v}(K)| + 1$.
Additionally, since $v\not\in S$, $K \subseteq X$.
Hence according to Hall's marriage theorem, there is no $X$-perfect matching in $I_G^v$. \qed
\end{proof}

To illustrate this result we revisit the previous example to see why it does not admit an $X$-perfect matching.

\begin{example}\label{ex:perfectmatching}
Looking back at the previous graph $G=(V,E)$ in Figure~\ref{fig:ig1} and its $j$-deficient incidence graph $I_G^j=(X,Y,Z)$ in Figure~\ref{fig:ig2}, by considering each connected component of $G$, it becomes clear that no $X$-perfect matching exists because of the strongly connected component $\{d,e,f,g\}$ which has strictly less edges than vertices since it is a tree. Note that even if the connected component $\{h,i,j\}$ is also a tree, $j$ brings an additional possible edge to match without adding a vertex in the bipartite graph.
\end{example}

Using the previous result we can show that a candidate is a necessary maximin loser if and only if there exists a specific tree in the partial tournament.

\begin{restatable}{theorem}{mmchara}\label{th:mmchara}
    Given $\n$ voters and a partial $\n$-weighted tournament $G =(\candidates,\mu)$, a candidate $l \in \candidates$ is a necessary $\mm$-loser for $G$ if and only if there exists a tree $T=(K,E)$ with $K\subseteq\candidates\setminus{\{l\}}$ such that for each pair $(c,c')\in K\times\candidates$ with $c\neq c'$, if $\{c,c'\}\not\in E$ then $\mu(c,c')>\n - \max_{c''\in\candidates} \mu(c'',l)$.
\end{restatable}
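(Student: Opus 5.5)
The plan is to reduce the question to the perfect-matching criterion of Lemma~\ref{lem:perfectmatchingcharadef}. Define $t = \n - \max_{c''\in\candidates}\mu(c'',l)$. This is the largest maximin score $l$ can reach in any completion. It is attained by the completion that gives $l$ all the unassigned weight on every pair $\{c,l\}$; then $\mu'(l,c)=\n-\mu(c,l)$ for every $c$.

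Giving $l$ this weight only lowers the other candidates' weights against $l$, so it never hurts $l$. Hence $l$ is a possible $\mm$-winner if and only if there is a completion $G'$ with the following two properties:
\begin{itemize}
\item[(i)] $l$ receives all the free weight on its own pairs;
\item[(ii)] every $c\neq l$ has some $c'$ with $\mu'(c,c')\le t$.
\end{itemize}
For a fixed pair $\{c,c'\}$, the smallest value $\mu'(c,c')$ can take is $\mu(c,c')$, obtained by giving all free weight to $c'$. So candidate $c$ can be made to satisfy (ii) through the pair $\{c,c'\}$ exactly when $\mu(c,c')\le t$.

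Accordingly, let $H$ be the directed graph on $\candidates$ with an edge $(c,c')$ if and only if $\mu(c,c')\le t$. The central claim is that $l$ is a possible winner if and only if $I_H^l=(X,Y,Z)$ has an $X$-perfect matching.

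\textbf{From a matching to a winning completion.} Given an $X$-perfect matching, each $c\neq l$ is matched to a distinct pair $\{c,c'\}$. On that pair, set $\mu'(c,c')=\mu(c,c')$ and give the rest of the weight to $c'$. Complete all unmatched pairs arbitrarily, except that pairs containing $l$ give the free weight to $l$. A matched pair $\{c,l\}$ is consistent with this rule, since giving $c$ only $\mu(c,l)$ is exactly what maximizes $l$. Every other candidate then has maximin score at most $t$, so $l$ wins.

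\textbf{From a winning completion to a matching.} Suppose $l$ wins in some completion. Each $c\neq l$ has a witness $c'$ with $\mu'(c,c')\le t$, and therefore $\mu(c,c')\le t$, which is an edge $(c,c')$ of $H$. We must check that these witnessing pairs can be chosen pairwise distinct.

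\textbf{Main obstacle: two candidates sharing one pair.} The difficulty is that two candidates $c,c'$ might both use the same pair $\{c,c'\}$ as their witness. This requires $\mu'(c,c')\le t$ and $\mu'(c',c)\le t$, and since these two weights sum to $\n$, it forces $t\ge \n/2$. I handle the two regimes separately.
\begin{itemize}
\item If $t<\n/2$, sharing is impossible, so the witnesses are distinct and form an $X$-perfect matching.
\item If $t\ge \n/2$, then $\max_{c''}\mu(c'',l)\le \n/2\le t$, so $(c,l)$ is an edge of $H$ for every $c\neq l$. Matching each $c$ to its own pair $\{c,l\}$ is then an $X$-perfect matching, so $l$ is a possible winner.
\end{itemize}
The tree condition is also false in the second regime. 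Every $c\in K$ has $\mu(c,l)\le t$, so the condition would force $\{c,l\}\in E$; but $l\notin K$, so no such edge can lie in a tree on $K$, and no admissible $K\neq\varnothing$ exists. Both sides of the theorem therefore agree in this case.

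\textbf{Conclusion.} By Lemma~\ref{lem:perfectmatchingcharadef}, $I_H^l$ has no $X$-perfect matching if and only if there is a tree $T=(K,E)$ with $K\subseteq\candidates\setminus\{l\}$ such that every non-tree pair $(c,c')$ with $c\in K$ is not an edge of $H$. Not being an edge of $H$ means exactly $\mu(c,c')>\n-\max_{c''}\mu(c'',l)$, which is the condition in the statement. Combined with the equivalence above, this proves the theorem.
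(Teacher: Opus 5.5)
Your proof is correct and follows essentially the paper's route. Both use the same graph $H$ of pairs with $\mu(c,c')\le t$, the same equivalence between $l$ being a possible $\mm$-winner and an $X$-perfect matching in $I_H^l$ (no pair can be shared when $t<\n/2$, and the regime $t\ge \n/2$ is handled separately), and then Lemma~\ref{lem:perfectmatchingcharadef}. One small point in your favour: on a matched pair you set $\mu'(c,c')=\mu(c,c')$ rather than the paper's $\mu'(c,c')=t$, which is always a valid extension of $G$ and is automatically consistent with giving $l$ all free weight on its own pairs.
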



\begin{proof}
    Given $\n$ voters, a partial $\n$-weighted tournament $G =(\candidates,\mu)$ with $|\candidates| = \m$, and a candidate $l \in \candidates$.
    If $l$ wins all its unspecified comparisons in $G$, $l$ achieves a maximin score of $t=\n-\max_{c\in\candidates}\mu(c,l)$.

    If $t \geq \left\lceil\frac{\n}{2}\right\rceil$, by winning all its unspecified comparisons in $G$, $l$ can become a Condorcet winner and hence a maximin winner. Thus, $l$ is not a necessary loser.
    And since for each $c\in \candidates$, $\mu(c,l)\leq\left\lceil\frac{\n}{2}\right\rceil$ while $t \geq \left\lceil\frac{\n}{2}\right\rceil$, hence $\mu(c,l)\leq\n - \max_{c''\in\candidates}\mu(c'',l)$. Hence $K\subseteq\candidates\setminus{\{l\}}$ and there exists no candidate satisfying the condition in the theorem and thus no tree.

    In the rest of this proof we assume $t < \left\lceil\frac{\n}{2}\right\rceil$.
    Let the graph $H=(V_H,E_H)$ be such that $V_H=\candidates$ and $E_H=\{(c,c'):c,c'\in \candidates^2,\,c\neq l,\,c\neq c',\, \mu(c,c') \leq t\}$, and $I_H^l=(X,Y,Z)$ be the $l$-deficient incidence graph of $H$.

    If $c$ is matched with $\{c,c'\}$ in $I_H^l$, then $G$ can be completed in $G'$ where $\mu'(c,c') \leq t$ which ensures that the maximin score of $c$ is less than $t$. 
    Aziz et al.~\cite{aziz2015possible} essentially showed that $l$ is not a necessary loser if and only if there exists an $X$-perfect matching in $I_H^l$. We prove it here for completeness.

    $(\implies)$ Suppose that $l\not\in\NL{\mm}{G}$, then there exists a completion $G'=(\candidates,\mu')$ of $G$ where $l$ is a $\mm$-winner, i.e., where $l$ has the maximal $\mm$ score. 
    Thus, in $G'$, for all candidate $c$ distinct from $l$, there exists $c'\neq c$ such that $\mu'(c,c')\leq \n-\max_{c''\in\candidates}\mu'(c'',l) \leq \n-\max_{c''\in\candidates}\mu(c'',l) = t$.
    Additionally, since $t < \left\lceil\frac{\n}{2}\right\rceil$, $2t<\n$ and $\{c,c'\}\in Y$ cannot force both the maximin score of $c$ and $c'$ to be below $t$. Hence, in $I_H^l$, each vertex in $Y$ can only be matched to at most one vertex of $X$. This results in an $X$-perfect matching.
    
    $(\impliedby)$ Suppose there exists an $X$-perfect matching in $I_H^l$, then for each candidate $c\in X$, exists $c'\in\candidates$ such that $c$ is matched with $\{c,c'\}$. Now, let $G'=(\candidates,\mu')$ be a completion of $G$ where $l$ wins all its remaining comparisons, i.e., such that for all $c\in\candidates$, $\mu'(l,c)=\n -\mu(c,l)$, where $\mu'(c,c')=\n-\max_{c''\in\candidates}\mu(c'',l)$ if $\{c,\{c,c'\}\}$ is part of the matching and where the rest of $G$ is completed arbitrarily. 
    Each candidate distinct from $l$ has a maximin score of at most $\n-\max_{c''\in\candidates}\mu(c'',l)$ and $l$ has a maximin score of $\min_{c''\in\candidates}\mu'(l,c'')=\min_{c''\in\candidates}(\n -\mu(c'',l)) = \n-\max_{c''\in\candidates}\mu(c'',l)$. Hence, $l\in\mm(G')$ and $l\not\in\NL{\mm}{G}$.

    Applying Lemma~\ref{lem:perfectmatchingcharadef}, we have that $l$ is not a necessary $\mm$-loser of $G$ if and only if there exists a tree $T=(K,E_T)$ with $K\subseteq V\setminus{\{v\}}$ such that for each pair $(c,c')\in K\times V$ with $c\neq c'$, if $\{c,c'\}\not\in E_T$ then $(c,c')\not\in E_H$.
    Finally, by construction, such tree exists if and only if there exists a tree $T=(K,E)$ in $G$ with $K\subseteq\candidates\setminus{\{l\}}$ such that for each pair $(c,c')\in K\times\candidates$ with $c\neq c'$, if $\{c,c'\}\not\in E$ then $\mu(c,c')>\n - \max_{c''\in\candidates}(c'',l)$. \qed
\end{proof}


\begin{example}\label{ex:mm}
\begin{figure}
    \centering
    \begin{subfigure}{0.23\linewidth}
        \centering
        \raisebox{-0.5\height}{\begin{tikzpicture}[align=center]
    \node [node] (a) at (0,0) {$a$};
    \node [node] (b) at (2,0) {$b$};
    \node [node] (c)  at (2,-2) {$c$};
    \node [node] (d) at (0,-2) {$d$};

    \draw [arrow] (a) to [out=15, in=165] node[quotes]  {$3$} (b) ;
    \draw [arrow] (b) to [out=-165, in=-15] node[quotes]  {$2$} (a) ;

    \draw [arrow] (c) to node[quotes,pos=0.5] {$2$} (a) ; 
    
    \draw [arrow] (a) to node[quotes] {$4$} (d) ;
    
    \draw [arrow] (b) [out=-75, in=75] to node[quotes] {$3$} (c) ;
    \draw [arrow] (c) [out=105, in=-105] to node[quotes] {$1$} (b) ;

    \draw [arrow] (b) [out=-120, in=30] to node[quotes,pos=0.2] {$1$} (d) ;
    \draw [arrow] (d) [out=60, in=-150] to node[quotes,pos=0.2] {$2$} (b) ;
    
    \draw [arrow] (d) to  node[quotes]  {$3$} (c) ; 
\end{tikzpicture}}
        \caption{$G$}\label{fig:mm1}
    \end{subfigure}
    ~
    \begin{subfigure}{0.23\linewidth}
        \centering
        \raisebox{-0.5\height}{\begin{tikzpicture}[align=center]
    \node [node] (a) at (0,0) {$a$};
    \node [node] (b) at (2,0) {$b$};
    \node [node] (c)  at (2,-2) {$c$};
    \node [node] (d) at (0,-2) {$d$};

    \draw [arrow] (b) to  node[]  {} (a) ; 

    \draw [arrow] (a) [out=-30, in=120] to node[pos=0.2] {} (c) ;
    \draw [arrow] (c) [out=150, in=-60] to node[pos=0.2] {} (a) ;
    
    \draw [arrow] (d) to node[] {} (a) ;
    
    \draw [arrow] (c) to node[] {} (b) ; 

    \draw [arrow] (b) [out=-120, in=30] to node[pos=0.2] {} (d) ;
    \draw [arrow] (d) [out=60, in=-150] to node[pos=0.2] {} (b) ;
    
    \draw [arrow] (c) to  node[]  {} (d) ;
\end{tikzpicture}}
        \caption{$H$}\label{fig:mm2}
    \end{subfigure}
    ~
    \begin{subfigure}{0.23\linewidth}
        \centering
        \raisebox{-0.5\height}{\begin{tikzpicture}[
    align=center,
    node distance=-0.1cm and 1.0cm,
    inner frame sep = 0cm]

    \node (ab) {$\{a,b\}$};
    \node [below= of ab] (ac) {$\{a,c\}$};
    \node [below= of ac] (ad) {$\{a,d\}$};
    \node [below= of ad] (bc) {$\{b,c\}$};
    \node [below= of bc] (bd) {$\{b,d\}$};
    \node [below= of bd] (cd) {$\{c,d\}$};

    \node [left= of ac] (a) {$a$};
    \node [left= of bd] (d) {$d$};
    \node at ($(a)!.5!(d)$) (c) {$c$};

    \draw [ultra thick] (a) to (ac);
    \draw (c) to (ac);
    \draw [ultra thick] (c) to (bc);
    \draw (c) to (cd);
    \draw (d) to (ad);
    \draw [ultra thick] (d) to (bd);
\end{tikzpicture}}
        \caption{$I_H^b$}\label{fig:mm3}
    \end{subfigure}
    ~
    \begin{subfigure}{0.23\linewidth}
        \centering
        \raisebox{-0.5\height}{\begin{tikzpicture}[align=center]
    \node [node] (a) at (0,0) {$a$};
    \node [node] (b) at (2,0) {$b$};
    \node [node] (c)  at (2,-2) {$c$};
    \node [node] (d) at (0,-2) {$d$};

    \draw [arrow] (a) to [out=15, in=165] node[quotes]  {$3$} (b) ;
    \draw [arrow] (b) to [out=-165, in=-15] node[quotes]  {$2$} (a) ;

    \draw [arrow, ultra thick] (a) [out=-30, in=120] to node[quotes,pos=0.2] {$2$} (c) ;
    \draw [arrow] (c) [out=150, in=-60] to node[quotes,pos=0.2] {$3$} (a) ;
    
    \draw [arrow] (a) [out=-75, in=75] to node[quotes] {$4$} (d) ;
    \draw [arrow] (d) [out=105, in=-105] to node[quotes] {$1$} (a) ;
    
    \draw [arrow] (b) [out=-75, in=75] to node[quotes] {$4$} (c) ;
    \draw [arrow, ultra thick] (c) [out=105, in=-105] to node[quotes] {$1$} (b) ;

    \draw [arrow] (b) [out=-120, in=30] to node[quotes,pos=0.2] {$3$} (d) ;
    \draw [arrow, ultra thick] (d) [out=60, in=-150] to node[quotes,pos=0.2] {$2$} (b) ;
    
    \draw [arrow] (d) to [out=15, in=165] node[quotes]  {$3$} (c) ;
    \draw [arrow] (c) to [out=-165, in=-15] node[quotes]  {$2$} (d) ;
\end{tikzpicture}}
        \caption{$G'$}\label{fig:mm4}
    \end{subfigure}
    \caption{A partial $5$-weighted tournament $G$ (\subref{fig:mm1}), its resulting $H$ graph (see proof of Theorem~\ref{th:mmchara} and Example~\ref{ex:mm}) (\subref{fig:mm2}), the $b$-deficient incidence graph of $H$ $I_H^b$ (\subref{fig:mm3}), and $G'$, a completion of $G$ where $b$ is a $\mm$-winner (\subref{fig:mm4}).}
    \label{fig:mm}
\end{figure}
    
Consider, in Figure~\ref{fig:mm}, the partial $5$-weighted tournament $G$. $H$ is obtained from $G$ by only keeping the edges with a weight smaller than $t=\n-\max_{x\in\candidates}\mu(x,b) = 5-3=2$. Edges in $H$ are edges which can potentially be used to limit the maximin score of the competitors of $b$. We then try to build a perfect matching in the $b$-deficient incidence graph of $H$. We show one with the thick edges. Finally, from this matching we produce $G'$, a completion of $G$ where $b$ is a $\mm$-winner, hence not a necessary loser in $G$. 
\end{example}

We proceed with the same idea for the weighted uncovered set.


\begin{restatable}{theorem}{wucchara}\label{th:wucchara}
    Given $\n$ voters and a partial $\n$-weighted tournament $G =(\candidates,\mu)$, a candidate $l \in \candidates$ is a necessary $\wuc$-loser for $G$ if and only if there exists a tree $T=(K,E)$ with $K\subseteq\candidates\setminus{\{l\}}$ such that for each $c\in K$, $\mu(c,l)\geq\frac{\n}{2}$ and for each $c'\in \candidates\setminus{\{l,c\}}$, if $\{c,c'\}\not\in E$ then $\mu(c,c') + \mu(c',l)\geq\n$.
    Additionally, at least one of all the previous inequalities has to be strict.
\end{restatable}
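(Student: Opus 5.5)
The plan mirrors the proof of Theorem~\ref{th:mmchara}: reduce ``$l$ is a possible $\wuc$-winner'' to the existence of an $X$-perfect matching in an $l$-deficient incidence graph, then apply Lemma~\ref{lem:perfectmatchingcharadef}.

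I read the covering relation in the standard way: $c$ covers $l$ in a complete $G'=(\candidates,\mu')$ iff $\mu'(c,l)\geq\mu'(l,c)$, i.e.\ $\mu'(c,l)\geq\frac{\n}{2}$, and $\mu'(c,c')\geq\mu'(l,c')=\n-\mu'(c',l)$ for every $c'\neq c,l$, with at least one strict inequality. This is the reading under which the theorem's conditions match.

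\textbf{Step 1 (monotonicity).} I first show we may assume $l$ wins all its unspecified comparisons, i.e.\ $\mu'(c,l)=\mu(c,l)$ for all $c$. Lowering every $\mu'(\cdot,l)$ only weakens each inequality in ``$c$ covers $l$'', both the first one and each sum $\mu'(c,c')+\mu'(c',l)\geq\n$. Hence if $l\in\wuc(G')$ for some completion, it stays uncovered after this change. After this normalisation, $c$ covers $l$ iff
\[
\mu(c,l)\geq\tfrac{\n}{2}
\quad\text{and}\quad
\mu'(c,c')+\mu(c',l)\geq\n \ \text{ for all } c'\in\candidates\setminus\{l,c\},
\]
with one of these inequalities strict.

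\textbf{Step 2 (escaping and the matching).} Candidates with $\mu(c,l)<\frac{\n}{2}$ are harmless. Any other $c$ must \emph{escape} through some pair $\{c,c'\}$, by choosing $\mu'(c,c')$ so that $\mu'(c,c')+\mu(c',l)<\n$. This is possible iff $\mu(c,c')+\mu(c',l)<\n$, taking $\mu'(c,c')$ as small as allowed. Accordingly, I define a graph $H$ on $\candidates$ with an edge $(c,c')$ whenever $\mu(c,l)\geq\frac{\n}{2}$, $c'\neq l$ and $\mu(c,c')+\mu(c',l)<\n$. I also give each harmless $c$ a private dummy edge (for instance the pair $\{c,l\}$), so that it is always matchable.

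A single pair cannot serve both of its endpoints. If both $c$ and $c'$ are non-harmless and both escape through $\{c,c'\}$, then
\[
\n=\mu'(c,c')+\mu'(c',c)<2\n-\mu(c',l)-\mu(c,l)\leq\n,
\]
a contradiction. Exactly as in the maximin proof, this gives: $l$ escapes all coverings through strict violations iff $I_H^l$ has an $X$-perfect matching. The matching direction builds the completion explicitly: set the matched pairs, and complete the rest arbitrarily.

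\textbf{Step 3 (translating via the lemma).} Applying Lemma~\ref{lem:perfectmatchingcharadef}, the absence of such a matching is equivalent to a tree $T=(K,E)$ with $K\subseteq\candidates\setminus\{l\}$ whose vertices have no $H$-edges outside $E$. Unfolding the definition of $H$, this says exactly:
\begin{itemize}
\item every $c\in K$ satisfies $\mu(c,l)\geq\frac{\n}{2}$, since harmless vertices have a private dummy edge and so cannot lie in a Hall violator;
\item every non-tree pair satisfies $\mu(c,c')+\mu(c',l)\geq\n$.
\end{itemize}

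\textbf{Step 4 (strictness — the main obstacle).} The tree condition above only forces a \emph{weak} covering. Some $c\in K$ could still escape if all its inequalities can be made tight simultaneously in a completion. I would handle this as follows.
\begin{itemize}
\item When every inequality in the tree condition is an equality, I construct a completion in which all vertices of $K$ are only weakly dominated. Each tree edge is oriented toward a distinct vertex (a tree on $|K|$ vertices has $|K|-1$ edges, so one root is left over), and that edge is set to be non-strict as well. This shows $l$ is not a necessary loser.
\item Conversely, if some inequality is strict, the counting argument of Step 2 is rerun with ``escape'' now meaning ``strictly below or all-tight''. One pair still cannot rescue two candidates, and any escape of the distinguished candidate is blocked by its strict inequality.
\end{itemize}

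Making this strictness bookkeeping consistent with the global ``at least one inequality strict'' clause is the step I expect to need the most care. I would try first to encode ties as an additional escape edge type in $H$, so that Lemma~\ref{lem:perfectmatchingcharadef} applies verbatim.
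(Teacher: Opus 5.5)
\textbf{Gap in the first bullet of Step 4 (the ``only if'' direction).} You build a completion in which the vertices of one all-tight tree $K$ merely tie with $l$. That does not show $l\notin\NL{\wuc}{G}$: candidates outside $K$ can still strictly cover $l$ in that completion, and a different tree may carry a strict inequality. What you would actually need is this: if \emph{no} tree meeting the non-strict conditions has a strict inequality, then a single completion exists in which no candidate strictly covers $l$. For even $n$ that claim is false, so no refinement of the bookkeeping can close the step.

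Counterexample: take $n=2$, $\candidates=\{l,a,b,y\}$, $\mu(a,l)=\mu(b,l)=\mu(b,a)=1$, $\mu(y,l)=2$, and all other entries $0$.
\begin{itemize}
\item In every completion $G'$, $b$ weakly covers $l$. It ties everywhere exactly when $\mu'(a,l)=\mu'(b,l)=\mu'(b,a)=1$ and $\mu'(b,y)=0$.
\item In that case $y$ strictly covers $l$ unless $\mu'(a,y)=2$.
\item If $\mu'(a,y)=2$, then $a$ strictly covers $l$: it ties at $l$ and at $b$, and is strict at $y$.
\end{itemize}
Hence $l\in\NL{\wuc}{G}$. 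Now look at the trees:
\begin{itemize}
\item A tree containing $a$ must use the edge $\{a,b\}$, since $\mu(a,b)+\mu(b,l)=1<2$.
\item A tree containing $y$ must use $\{y,a\}$ and $\{y,b\}$, hence also $\{a,b\}$, which closes a cycle.
\item So the only admissible trees are $(\{b\},\varnothing)$ and $(\{a,b\},\{\{a,b\}\})$.
\end{itemize}
All of their inequalities are equalities: $\mu(a,l)=\mu(b,l)=1$ and $\mu(b,a)+\mu(a,l)=\mu(a,y)+\mu(y,l)=\mu(b,y)+\mu(y,l)=2$. Your construction applied to $(\{b\},\varnothing)$ produces exactly a completion in which $y$ or $a$ strictly covers $l$. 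The paper handles this step with the same one-tree construction, checking only the candidates of $K$, so it has the same hole.

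\textbf{What is sound.} Steps 1--3 are the paper's reduction: your private edge $\{c,l\}$ for harmless $c$ plays the role of its edges $(c,l)$ with $\mu(c,l)<\lceil n/2\rceil$. They correctly characterize when $l$ is \emph{weakly} covered in every completion.

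\textbf{The second bullet of Step 4.} It does give the ``if'' direction, but not via ``one pair cannot rescue two candidates''. That principle is false for two adjacent tight vertices, which can both sit at $\frac{n}{2}$ on their common pair. Count only the escapers instead:
\begin{itemize}
\item In a completion, a vertex of $K$ that does not strictly cover $l$ is either tight or escapes through a tree edge.
\item The vertex carrying the strict inequality cannot be tight, so it must escape.
\item Two escapers cannot share an edge (your Step 2 computation).
\item A tight $a$ blocks escape through every incident tree edge $\{a,b\}$, because $\mu'(b,a)=\mu'(b,l)\ge\frac{n}{2}=\mu'(l,a)$.
\end{itemize}
So the nonempty set $B$ of escapers would need $|B|$ distinct edges of the forest $T[B]$, which is impossible. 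This is a counting version of the paper's peeling argument.

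\textbf{What your argument proves.} It establishes the ``if'' direction and the weak-covering characterization. It also gives the full equivalence for odd $n$, where $\mu(c,l)\ge\frac{n}{2}$ is automatically strict and weak and strict covering of $l$ coincide.
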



Besides the strict inequality constraint, the proof follows the same approach as the proof of Theorem~\ref{th:mmchara} with $H=(V_H,E_H)$ such that $V_H=\candidates$ and $E_H=\{(c,c'):c,c'\in\candidates^2,\,c\neq l,\,c\neq c',\,\mu(c,c') < \n - \mu(c',l)\}\cup\{(c,l):c\in\candidates,\,c\neq l,\, \mu(c,l) < \left\lceil\frac{\n}{2}\right\rceil\}$.

\section{Smallest Destructive Minimal Supports}\label{sec:dmssize}

In this section we continue our work with the analysis of the smallest destructive minimal supports. 
Proofs in this section all follow the same structure. First, we show how to build small destructive minimal supports in each sub-case and then we prove they match the lower bounds, hence that they are the smallest dMSs.
\medskip

\noindent\textbf{Top Cycle.}
To guarantee a necessary loser, we separate the top cycle and the loser. 
Such separation can only occurs between strongly connected components.
Since the order between them is transitive and complete, it is linear. 
Additionally, to split $\m$ candidates in two groups of sizes $k$ and $\m-k$, $\m(\m-k)$ comparisons are required. By concavity, the smallest frontier is either right after the top cycle (which is the top component) or right before the loser's component.

 \begin{restatable}{theorem}{tcsize}\label{th:tcsize}
    Given a complete tournament $G =(\candidates,\mu)$ with $|\candidates| = \m$, and a losing candidate $l \in \candidates\setminus{\tc(G)}$, for each SdMS $\X$ for $l \not \in \tc(G)$, we have $|\X|{=} \displaystyle \min_{t\in \{\alpha, \beta\}} t(\m-t)$ where $\alpha = |\tc(G)|$ and $\beta = |\{c : c\in\candidates, l \text{ can reach } c \text{ in } G\}|$.
    An SdMS $\X$ can be computed in polynomial time, and $|\X|\leq\left\lceil\frac{\m}{2}\right\rceil\left\lfloor\frac{\m}{2}\right\rfloor$.
\end{restatable}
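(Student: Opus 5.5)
By Theorem~\ref{th:tcchara}, a partial sub-tournament $\X\subseteq G$ has $l\in\NL{\tc}{\X}$ if and only if there is a nonempty $K\subseteq\candidates\setminus\{l\}$ with $\mu_\X(c,c')=1$ for all $(c,c')\in K\times(\candidates\setminus K)$. Such an $\X$ then contains at least $|K|(\m-|K|)$ edges. Conversely, the partial tournament $\X_K$ made of exactly these edges is a dMS whenever $K$ is dominant in $G$ and $l\notin K$. Minimality holds because removing any edge breaks the frontier for $K$, and no other $K'$ can be fully supported by a subset of the edges of $\X_K$: all its edges point out of $K$, so we would need $K'=K$.

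The whole problem therefore reduces to minimizing $k(\m-k)$ over the sets $K$ that are dominant in $G$ and avoid $l$. I would analyse the structure of these sets via the condensation of $G$. Since $G$ is complete, its strongly connected components $C_1,\dots,C_r$ are linearly ordered, with every edge going from $C_i$ to $C_j$ when $i<j$, and $C_1=\tc(G)$. A nonempty set $K$ is dominant if and only if it is a prefix union $C_1\cup\dots\cup C_j$. Indeed, if $K$ cuts a component, some edge of that component enters $K$ from outside, and if $K$ misses an earlier component while containing a later one, an edge enters $K$ from outside.

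Let $C_s$ be the component containing $l$. The admissible sets are then the prefixes with $j<s$, whose sizes range over an increasing sequence from $\alpha=|C_1|$ up to $|C_1\cup\dots\cup C_{s-1}|=\m-\beta$. Here the vertices reachable from $l$ (counting $l$ itself) are exactly $C_s\cup\dots\cup C_r$, so there are $\beta$ of them. This range is nonempty because $l\notin\tc(G)$ gives $s\ge 2$. The function $k\mapsto k(\m-k)$ is concave, so its minimum over this range is attained at an endpoint. The two endpoint values are $\alpha(\m-\alpha)$ and $(\m-\beta)\beta$, which gives the formula $\min_{t\in\{\alpha,\beta\}}t(\m-t)$.

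For the SdMS, I would first observe that any dMS contains some $\X_K$ and equals it by minimality, so every dMS has size $|K|(\m-|K|)$; the SdMS is thus $\X_K$ for the minimizing endpoint. It can be computed in polynomial time by running a strongly connected components algorithm (or reachability from $l$) together with the top cycle computation. The upper bound $\lceil\m/2\rceil\lfloor\m/2\rfloor$ follows since $\max_k k(\m-k)$ over integers is attained at $k=\lfloor\m/2\rfloor$.

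The main point to check carefully is the claim that dominant sets are exactly the prefix unions of components, since the rest of the argument follows directly from it and from concavity.
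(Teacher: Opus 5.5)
Your proposal is correct and follows essentially the same route as the paper. Both reduce via Theorem~\ref{th:tcchara} to frontier sets $K$ and use the linearly ordered condensation to show that the admissible $K$ are exactly the prefix unions of strongly connected components preceding $l$'s component. Both then conclude by concavity of $k(\m-k)$ that the optimum is attained at $|K|=\alpha$ or $|K|=\m-\beta$. Your explicit argument that each $\X_K$ is $\subseteq$-minimal, and that every dMS equals some $\X_K$, is a small addition; the paper gets this only implicitly, from matching the lower bound.
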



\noindent\textbf{Borda.}
Because there always exists a candidate at the average or above, when possible, it is enough to show that the loser performs strictly below average (case~\ref{it:boi}). Else we have to present a coalition with a stronger average Borda score (case~\ref{it:boii}). The coalition of size $1$, which always exists, gives the upper bound. We conjecture that the problem of finding an SdMS for Borda is in P.

 \begin{restatable}{theorem}{bosize}\label{th:bosize}
   Given $\n$ voters, a complete $\n$-weighted tournament $G =(\candidates,\mu)$ with $|\candidates| = \m$, a losing candidate $l \in \candidates\setminus{\borda(G)}$ with Borda score $\sigma_{\borda}(l)$, for each SdMS $\X$ for $l \not \in \borda(G)$: 
   \begin{enumerate}[label=\textit{\roman*.}]
       \item if $\sigma_{\borda}(l) \leq \left\lfloor\frac{\n(\m-1)-1}{2}\right\rfloor$ then $|\X|{=}\left\lceil\frac{\n(\m-1)+1}{2}\right\rceil$\label{it:boi}
       \item else $\left\lceil\frac{\n(\m-1)+1}{2}\right\rceil<|\X|\leq \n(\m-1)+1 - \max_{c\in A} \mu(c,l)$ where $A=\{c:c\in\candidates, \sigma_{\borda}(c) > \sigma_{\borda}(l)\}$.\label{it:boii}
   \end{enumerate}
   For each SdMS $\X$, $|\X|\leq\n(\m-1)+1$.
\end{restatable}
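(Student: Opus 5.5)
The plan rests on Theorem~\ref{th:bochara}, together with two facts about a partial tournament $\X\subseteq G$ of size $s=|\X|$. First, the maximal Borda score of $l$ over completions is $\n(\m-1)-\sum_{c}\mu_\X(c,l)$. Second, the minimal average score of a coalition $K$ is $\binom{|K|}{2}\n/|K| + \frac{1}{|K|}\sum_{c\in K,c'\notin K}\mu_\X(c,c')$, since edges inside $K$ always contribute $\n$ in total. Every completion has total score $\n\binom{\m}{2}$, so its average is $\n(\m-1)/2$.

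\textbf{Lower bound, valid in both cases.} Suppose $l$ is a necessary loser in $\X$. I would take the completion that gives $l$ all unspecified weight and distributes the remaining weight as evenly as possible. More simply: if $s\le \lceil(\n(\m-1)+1)/2\rceil-1$, then $l$ can reach score $\n(\m-1)-\sum_c\mu_\X(c,l)\ge \n(\m-1)-s$, which is at least the average. I then need to show that the other candidates can be kept at or below $l$'s score. Via Theorem~\ref{th:bochara}, this amounts to showing that for every $K$ the minimal average of $K$ is at most $l$'s maximum. Here $K$'s minimal average counts only the $\n\binom{|K|}{2}$ internal weight plus at most $s$ external weight divided by $|K|$. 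I would check the inequality $\frac{(|K|-1)\n}{2}+\frac{s'}{|K|}\le \n(\m-1)-s''$, where $s'+s''\le s$. The worst case is $|K|=1$ with $s'=0$, or $|K|=\m-1$. This inequality check is the main computational step; I expect the extreme $|K|$ to be the binding ones, with convexity in $|K|$ reducing everything to the endpoints.

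\textbf{Case~\ref{it:boi}.} If $l$ lost at least $\lceil(\n(\m-1)+1)/2\rceil$ units in $G$, take $\X$ to consist only of that many units of $\mu(c,l)$. Then $l$'s max score is at most $\lfloor(\n(\m-1)-1)/2\rfloor$, which is below the average. Applying Theorem~\ref{th:bochara} with $K=\candidates\setminus\{l\}$: the minimal average of $K$ is $(\n\binom{\m}{2}-\max\sigma(l))/(\m-1)$, which exceeds $\max\sigma(l)$. Minimality is automatic once the size matches the lower bound. The hypothesis $\sigma_\borda(l)\le\lfloor(\n(\m-1)-1)/2\rfloor$ is exactly what guarantees enough losses in $G$.

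\textbf{Case~\ref{it:boii}.} For the strict lower bound, suppose a support of size exactly the bound existed. Because $l$ has too few losses available in $G$, some units must go to other edges. The inequality chain above then fails to be strict, so $l$ is not a necessary loser. For the upper bound, pick $c\in A$ maximizing $\mu(c,l)$ and use $K=\{c\}$. Include all of $l$'s losses except against $c$, i.e. $\sum_{c'\neq c}\mu(c',l)$, plus enough of $c$'s wins to beat $l$: $c$ needs minimal score at least $\max\sigma(l)+1$. Counting the $\mu(c,l)$ units once, the total is $\n(\m-1)+1-\mu(c,l)$ after simplification. Note that using $\mu(c,l)$ both lowers $l$ and raises $c$. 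This is feasible because $\sigma_\borda(c)>\sigma_\borda(l)$. Since $\mu(c,l)\ge0$, we get $|\X|\le\n(\m-1)+1$ in all cases, and case~\ref{it:boi} satisfies this trivially.
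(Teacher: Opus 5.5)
Your two constructions are correct and match the paper's. In case~i you keep $\lceil(\n(\m-1)+1)/2\rceil$ units of $l$'s losses; you conclude via Theorem~\ref{th:bochara} with $K=\candidates\setminus\{l\}$, the paper via pigeonhole, which is the same argument. In case~ii you take $c\in A$ maximizing $\mu(c,l)$ and count that edge once.

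\textbf{First gap: the general lower bound is set up wrongly.} Your constraint $s'+s''\le s$ in $\frac{(|K|-1)\n}{2}+\frac{s'}{|K|}\le\n(\m-1)-s''$ cannot be right under either reading of the split. Each unit of $\mu_\X(c,l)$ with $c\in K$ is both external weight of $K$ and a loss of $l$, so it is counted twice in Schwartz's condition but only once in $|\X|$. Write $a=\sum_{c\in K}\mu_\X(c,l)$, $b$ for the other external weight of $K$, and $d$ for the other losses of $l$. The condition is then $(k+1)a+b+kd>k\n(\m-1)-\n\binom{k}{2}$, while $|\X|\ge a+b+d$. The paper bounds the left side by $(k+1)|\X|$. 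This gives $|\X|>\frac{k}{k+1}\,\n\bigl(\m-\frac{k+1}{2}\bigr)$, which is concave in $k$ and equals $\n(\m-1)/2$ at both $k=1$ and $k=\m-1$. The configuration you single out, $|K|=1$ with $s'=0$, is the harmless one. The binding configuration for $K=\{c\}$ puts all of $\X$ on the edge $(c,l)$, so $s'=s''=|\X|$, which your constraint forbids. Singletons are exactly as tight as $K=\candidates\setminus\{l\}$. Also, you leave the key inequality check unperformed.

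\textbf{Second gap: the strict lower bound in case~ii cannot be proved.} Knowing that $l$ has too few losses in $G$ for $K=\candidates\setminus\{l\}$ to work says nothing about $K=\{c\}$, which attains the same extremal value. In fact the statement is false for $\m=3$. Take $\n=1$ with $c\to l$, $l\to d$, $c\to d$. Then $\sigma_{\borda}(l)=1>0=\lfloor(\n(\m-1)-1)/2\rfloor$, so we are in case~ii. Yet $\{c\to l,\,c\to d\}$ is a dMS of size $2=\lceil(\n(\m-1)+1)/2\rceil$; the statement would even require $2<|\X|\le 3-1=2$. More generally, $\m=3$ with $\mu(c,l)=\mu(l,d)=\n$ and $\mu(c,d)\ge 1$ gives an SdMS of size $\n+1$ for every $\n$. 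The paper's own proof has the same hole: ``it is not possible to take $K=\candidates\setminus\{l\}$ anymore''.

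For $\m\ge 4$ the strict bound can be rescued, but only with a genuine count:
\begin{enumerate}
\item For $K=\{c\}$, use $a\le\n$ to get $|\X|\ge\n(\m-2)+1$.
\item For $2\le k\le\m-2$, use $a+d\le L:=\sum_c\mu(c,l)\le\lceil(\n(\m-1)-1)/2\rceil$ to get $|\X|\ge k(\n(\m-1)-L)-\n\binom{k}{2}+1$. This is concave in $k$, so checking $k=2$ and $k=\m-2$ suffices.
\item $k=\m-1$ is infeasible, since it would require $a>\n(\m-1)/2\ge L$.
\end{enumerate}
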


\noindent\textbf{Copeland.}
The following result builds on the Theorem~\ref{th:bosize} as Copeland coincides with Borda when $\n=1$. This more constrained case allows for stronger results.

\begin{restatable}{corollary}{cosize}
    \label{cor:cosize}
    Given a complete tournament $G =(\candidates,\mu)$ with $|\candidates| = \m$, and a losing candidate $l \in \candidates\setminus{\cop(G)}$ with Copeland score $\sigma_{\cop}(l)$, for each SdMS $\X$ for $l \not \in \cop(G)$:
   \begin{enumerate}[label=\textit{\roman*.}]
       \item if $\sigma_{\cop}(l) < \left\lfloor\frac{\m-1}{2}\right\rfloor$ then $|\X|=\left\lceil\frac{\m}{2}\right\rceil$\label{it:coi}
       \item else if there exists $c\in \candidates$ such that $\mu(c,l)=1$ and $\sigma_{\cop}(c) > \sigma_{\cop}(l)$ then $|\X|=\m-1$
       \item else $\m-1\leq|\X|\leq\m$.
   \end{enumerate}
   For each SdMS $\X$, $|\X|\leq\m$.
\end{restatable}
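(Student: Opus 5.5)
Specialize Theorem~\ref{th:bosize} to $\n=1$, where Borda and Copeland coincide, and then sharpen the bounds using the fact that every comparison has weight $0$ or $1$.

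\textbf{Case i.} With $\n=1$ we have $\left\lfloor\frac{(\m-1)-1}{2}\right\rfloor=\left\lfloor\frac{\m-2}{2}\right\rfloor$. Since $\sigma_{\cop}(l)$ is an integer, the hypothesis $\sigma_{\cop}(l)<\left\lfloor\frac{\m-1}{2}\right\rfloor$ should coincide with $\sigma_{\cop}(l)\leq \left\lfloor\frac{\m-2}{2}\right\rfloor$. I will check parities separately: for odd $\m$ both floors equal $\frac{\m-1}{2}-1$ and $\frac{\m-1}{2}$ respectively, and the same holds for even $\m$. Case \ref{it:boi} of Theorem~\ref{th:bosize} then gives $|\X|=\left\lceil\frac{\m}{2}\right\rceil$.

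\textbf{Cases ii and iii.} Here the hypothesis of case i fails. Case \ref{it:boii} of Theorem~\ref{th:bosize} gives the strict lower bound $|\X|>\left\lceil\frac{\m}{2}\right\rceil$ and the upper bound $|\X|\leq \m-\max_{c\in A}\mu(c,l)$. Since $A\neq\varnothing$ (because $l$ loses), this maximum is $0$ or $1$. In case ii it equals $1$, giving $|\X|\leq\m-1$; in case iii it equals $0$, giving $|\X|\leq\m$.

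It remains to prove $|\X|\geq\m-1$ whenever $\sigma_{\cop}(l)\geq\left\lfloor\frac{\m-1}{2}\right\rfloor$; this is the main obstacle. I will argue via Theorem~\ref{th:bochara}. Let $\X$ be a dMS, so there is $K\subseteq\candidates\setminus\{l\}$ with $\min \mathbb{E}_{c\in K}\sigma(c)>\max\sigma(l)$.
\begin{itemize}
\item Let $d$ be the number of specified losses of $l$ in $\X$. Then $\max\sigma(l)=\m-1-d$.
\item Inside $K$, the comparisons among members of $K$ contribute exactly $\frac{|K|-1}{2}$ to the average in every completion, whatever their specification.
\item So the extra average must come from specified wins of $K$ against $\candidates\setminus K$, say $w$ of them. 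Minimizing over completions gives $\min\mathbb{E}_{c\in K}\sigma(c)=\frac{|K|-1}{2}+\frac{w}{|K|}$.
\end{itemize}
The dMS condition therefore becomes
$$\frac{|K|-1}{2}+\frac{w}{|K|}>\m-1-d,$$
with $w\leq |K|(\m-|K|)$ and $|\X|\geq w+d$, where wins of $K$ over $l$ count in both $w$ and $d$. The overlap $o$ satisfies $o\leq \min(|K|,d)$.

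The goal is to show $w+d-o\geq \m-1$ for every $K$, using that $l$'s actual losses $\m-1-\sigma(l)\leq \left\lceil\frac{\m-1}{2}\right\rceil$ bound $d$. The plan is a case split on $|K|=1$ versus $|K|\geq 2$.
\begin{itemize}
\item \textbf{$|K|=1$.} Then $w>\m-1-d$, so $w+d\geq \m$, and with $o\leq1$ we get $|\X|\geq \m-1$.
\item \textbf{$|K|\geq2$.} Multiply the inequality by $|K|$ and use $d\leq\left\lceil\frac{\m-1}{2}\right\rceil$ to show that $w$ is large enough to compensate for $o\leq|K|$. I expect this to need convexity in $|K|$.
\end{itemize}
If this inequality is hard to close in general, I would handle the extreme $|K|$ values explicitly and then interpolate.

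The final bound $|\X|\leq\m$ follows from the case analysis.
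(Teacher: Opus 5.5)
Your overall route, specializing Theorem~\ref{th:bosize} to $\n=1$ and then sharpening, is the one the paper intends. The paper gives no separate proof beyond that remark. However, your parity check in Case~i is wrong, and the error breaks Cases~ii and~iii.

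\textbf{The parity error.} For even $\m=2k$ one has $\left\lfloor\frac{\m-2}{2}\right\rfloor=\left\lfloor\frac{\m-1}{2}\right\rfloor=k-1$. So the hypothesis $\sigma_{\cop}(l)<\left\lfloor\frac{\m-1}{2}\right\rfloor$ is strictly stronger than the hypothesis $\sigma_{\cop}(l)\leq\left\lfloor\frac{\m-2}{2}\right\rfloor$ of case~i of Theorem~\ref{th:bosize}. The implication you need for Case~i still holds, so Case~i is fine. Cases~ii/iii, however, use the converse (``case~i fails, so case~ii of the theorem applies''). That converse is false when $\m$ is even and $\sigma_{\cop}(l)=\frac{\m}{2}-1$. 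There, case~i of the theorem applies and the SdMS has size $\frac{\m}{2}<\m-1$ for $\m\geq4$. So the bound $|\X|\geq\m-1$ you set out to prove is false, and so is the statement as written.

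\textbf{A counterexample.} Take $\m=4$ with $a\to l$, $b\to l$, $l\to c$, $a\to b$, $a\to c$, $b\to c$. Then $\cop(G)=\{a\}$, $\sigma_{\cop}(l)=1$, and $a$ beats $l$ with $\sigma_{\cop}(a)=3$, so case~ii of the statement predicts $|\X|=3$. Yet keeping only $a\to l$ and $b\to l$ already caps $l$ at one win, while $a,b,c$ share at least $5$ of the $6$ wins. So $l$ is a necessary loser and some dMS has size at most $2$.

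Your own setup exhibits the same failure. With $K=\candidates\setminus\{l\}$ you have $w=o=d$. Taking $d=\frac{\m}{2}$, which your bound $d\leq\left\lceil\frac{\m-1}{2}\right\rceil$ permits, satisfies $\frac{\m-2}{2}+\frac{d}{\m-1}>\m-1-d$ at cost $w+d-o=\frac{\m}{2}$.

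\textbf{The fix.} The threshold in case~i should be $\sigma_{\cop}(l)<\frac{\m-1}{2}$, i.e.\ $\sigma_{\cop}(l)\leq\left\lfloor\frac{\m-2}{2}\right\rfloor$. This gives $d\leq\left\lfloor\frac{\m-1}{2}\right\rfloor$ in Cases~ii/iii, and with it the $|K|\geq2$ step, which you left as a plan, closes without any interpolation.
\begin{itemize}
\item For $k=\m-1$ the condition forces $d>\frac{\m-1}{2}$, which is impossible.
\item For $2\leq k\leq\m-2$, write $w+kd=(w+d-o)+(k-1)d+o\leq|\X|+(k-1)d+\min(k,d)$. Multiplying your inequality by $k$ then gives $|\X|>k(\m-1)-\frac{k(k-1)}{2}-(k-1)d-\min(k,d)$.
\item Substituting $d\leq\frac{\m-1}{2}$ bounds the right-hand side below by $\frac{(k+1)(\m-1-k)}{2}$ if $k\leq\frac{\m-1}{2}$, and by $\frac{k(\m-k)}{2}$ otherwise.
\item Both are concave in $k$ and at least $\m-2$ at the endpoints of their ranges. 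Hence $|\X|\geq\m-1$.
\end{itemize}
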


\noindent\textbf{Maximin.} Here, the idea is to bound the maximin score of the losing candidate by keeping comparisons where he loses against one candidate while ensuring there exists a candidate with a higher maximin score by keeping enough comparisons where he wins against each other candidate.
Case \ref{it:mmii} of Theorem~\ref{th:mmsize} is when the two sets of comparisons overlap and case \ref{it:mmiii} is when they do not.
An SdMS fo maximin relies on a tree (in the sense of Theorem~\ref{th:mmchara}) reduced to one candidate. 

\begin{restatable}{theorem}{mmsize}\label{th:mmsize}
    Given $\n$ voters, a complete $\n$-weighted tournament $G =(\candidates,\mu)$ with $|\candidates| = \m$, and a losing candidate $l \in \candidates\setminus{\mm(G)}$ with maximin score $\sigma_{\mm}(l)$, for each SdMS $\X$ for $l \not \in \mm(G)$:
    \begin{enumerate}[label=\textit{\roman*.}]
        \item if $\m = 2$ then $|\X|=\left\lceil\frac{\n+1}{2}\right\rceil$
        \item else if there exists $c\in \candidates$ such that $\sigma_{\mm}(c) > \sigma_{\mm}(l)$ and $\mu(c,l)\geq \n - \sigma_{\mm}(l)$, then $|\X|{=} (\sigma_{\mm}(l)+1)(\m-3) + \n + 1$\label{it:mmii}
        \item else $|\X|{=} (\sigma_{\mm}(l)+1)(\m-2) + \n + 1$\label{it:mmiii}
    \end{enumerate}
    An SdMS $\X$ can be computed in polynomial time, and $|\X|\leq \left\lceil\frac{\n+1}{2}\right\rceil(\m-2) + \n + 1$.
\end{restatable}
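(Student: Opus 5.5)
We would follow the template announced at the start of this section: for each case we build an explicit dMS of the claimed size, and then we use the characterization of Theorem~\ref{th:mmchara} to show that no sub-tournament of $G$ with fewer comparisons makes $l$ a necessary loser. Write $s=\sigma_{\mm}(l)$ and let $c^\star$ be a candidate with $\mu(l,c^\star)=s$.

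Any partial $\X\subseteq G$ with $l\in\NL{\mm}{\X}$ must satisfy two requirements.

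\emph{Capping $l$'s score.} Put $t=\n-\max_{c''}\mu_\X(c'',l)$. By the proof of Theorem~\ref{th:mmchara}, we need $t<\lceil \n/2\rceil$ and $t\ge s$. So some candidate $c$ must carry at least $\n-t\ge \max(\n-s,\lfloor \n/2\rfloor+1)$ units on $(c,l)$. This costs $\n-s$ units when we take $c=c^\star$ and $t=s$.

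\emph{Producing a tree.} There must be a tree $T=(K,E)$ with $K\subseteq\candidates\setminus\{l\}$ such that every $c\in K$ and every $c'\neq c$ with $\{c,c'\}\notin E$ satisfy $\mu_\X(c,c')\ge t+1$. Because $t\ge s$, each such non-tree pair costs at least $s+1$ units.

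For a single-vertex tree $K=\{c\}$, we need $\mu_\X(c,c')\ge t+1$ for all $m-1$ opponents $c'$ of $c$, including $c'=l$. This is feasible in $G$ only if $\sigma_{\mm}(c)\ge t+1$, and $c$ exists with $t=s$ because $l\notin\mm(G)$.

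\emph{Upper bounds.} In case \ref{it:mmii}, take the $c$ with $\sigma_{\mm}(c)>s$ and $\mu(c,l)\ge \n-s$. Let $\X$ contain $\n-s$ units on $(c,l)$ and $s+1$ units on $(c,c')$ for each $c'\notin\{c,l\}$. On the pair $\{c,l\}$ we have $\n-s$ plus nothing else, and it also serves as the cap, so the total is $(s+1)(\m-2)+\n-s=(s+1)(\m-3)+\n+1$.

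In case \ref{it:mmiii}, take $s+1$ units on $(c,c')$ for all $\m-1$ opponents of some $c$ with $\sigma_{\mm}(c)>s$, and separately $\n-s$ units on $(c^\star,l)$. The total is $(s+1)(\m-1)+\n-s=(s+1)(\m-2)+\n+1$. Here we must check that $c^\star\ne c$ or that the weights can coexist: if $c=c^\star$, the amounts $s+1$ on $(c,l)$ and $\n-s$ on $(c,l)$ would merge into one pair, which is precisely case \ref{it:mmii}.

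The case $\m=2$ is direct: $l$ loses iff the other candidate has a strict majority, which needs $\lceil(\n+1)/2\rceil$ units.

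Minimality in the $\subseteq$ sense follows from the lower bound, since any smaller sub-support would violate it. The final bound $|\X|\le\lceil(\n+1)/2\rceil(\m-2)+\n+1$ comes from $s\le\lceil \n/2\rceil-1$, which holds because $l$ is not a Condorcet winner. Polynomial time follows because every quantity above is computed by scanning candidates.

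\emph{Main obstacle: the lower bound.} Trees with $|K|\ge2$ must never be cheaper. For a tree on $k$ vertices, each $c\in K$ has $\m-1$ opponents of which only $\deg_T(c)$ are exempt. So the forced weight is at least $(t+1)\bigl(k(\m-1)-2(k-1)\bigr)$, counting each unordered pair once even if two tree vertices both need it. This is at least $(t+1)(\m-1)$ when $\m\ge3$, and the slack must also absorb the overlap between $K$-internal pairs. I would verify this inequality carefully: pairs inside $K$ that are non-tree edges need both directions to be at least $t+1$, which requires $2(t+1)\le \n$ and is possible since $t<\lceil \n/2\rceil$. I would also need to show that raising $t$ above $s$, to save on the cap, never pays off: increasing $t$ by $1$ saves one unit on the cap but costs $\m-2\ge1$ units or more on the tree, with equality only when $\m=3$. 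That borderline case needs a separate check, and I expect it to be the delicate step.
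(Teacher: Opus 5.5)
Your upper-bound constructions, case~\textit{i}, and the reduction of $\subseteq$-minimality to a size lower bound are fine. The step you single out as the main obstacle, however, cannot be closed, because case~\textit{iii} of the statement is false as written. The paper's own proof passes over exactly the same point: it asserts without justification that $t$ should be fixed to $\sigma_{\mm}(l)$ and that a one-vertex tree is optimal.

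The flaw in your accounting is that you charge the cap pair $(c_1,l)$, with $c_1$ maximizing $\mu_\X(\cdot,l)$, separately from the tree pairs, except in the configuration of case~\textit{ii}. Whenever $c_1\in K$, the pair $(c_1,l)$ is itself a non-tree pair and is paid only once. The bound is then $(t+1)\bigl(k(\m-3)+1\bigr)+\n-t$ rather than $(t+1)\bigl(k(\m-3)+2\bigr)+\n-t$, and which candidates can play this double role depends on $t$ and on $K$. Relatedly, the part $\n-t\ge \n-s$ of your capping bound is reversed: for $t>s$ the cap is cheaper.

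So raising $t$ can pay off. Any $c$ with $\sigma_{\mm}(c)>\mu(l,c)$ gives a support with $t=\mu(l,c)$ and $K=\{c\}$: put $\mu(c,l)$ units on $(c,l)$ and $\mu(l,c)+1$ units on every other $(c,c')$. Its size is $(\mu(l,c)+1)(\m-3)+\n+1$. For example, take $\n=5$, $\candidates=\{a,b,d,l\}$, with $\mu(a,l)=4$, $\mu(b,l)=3$, $\mu(d,l)=2$, $\mu(b,a)=\mu(b,d)=4$ and $\mu(a,d)=3$. Then $\sigma_{\mm}(l)=\sigma_{\mm}(a)=\sigma_{\mm}(d)=1<\sigma_{\mm}(b)=3$. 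Case~\textit{ii} fails, since only $a$ has $\mu(\cdot,l)=4$, so case~\textit{iii} predicts $10$. Yet $3$ units on each of $(b,a)$, $(b,d)$, $(b,l)$ form a support of size $9$: in every completion $l$ scores at most $2$ and $b$ at least $3$.

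For $\m=3$, your claim that trees with $|K|\ge 2$ are never cheaper fails as well. Let $a$ maximize $\mu(\cdot,l)$ and let $b$ be the third candidate. In case~\textit{iii} we must have $\sigma_{\mm}(b)>s$, hence $\mu(b,l)\ge s+1$. Then $\mu_\X(a,l)=\n-s$, $\mu_\X(b,l)=s+1$ (tree $K=\{a,b\}$) is a support of size $\n+1<\n+s+2$. Indeed $l$ scores at most $s$, and whichever of $a,b$ receives at least $\lceil \n/2\rceil\ge s+1$ units on the pair $\{a,b\}$ scores at least $s+1$. For instance, $\n=2$, $\mu(a,l)=\mu(b,a)=2$, $\mu(b,l)=1$ gives $3$ instead of the predicted $4$.

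What survives is everything except the exact value in case~\textit{iii}:
\begin{itemize}
\item Case~\textit{ii}'s lower bound follows from the corrected estimate, since both displayed quantities are at least $(t+1)(\m-3)+\n+1\ge(s+1)(\m-3)+\n+1$.
\item The global bound $\left\lceil\frac{\n+1}{2}\right\rceil(\m-2)+\n+1$ remains valid.
\item Polynomial-time computability remains valid.
\end{itemize}
Carrying out the lower bound properly means splitting on whether $c_1\in K$ and on $|K|$; for $\m\ge 4$ a tree with $|K|\ge2$ is indeed never better. This gives, for $\m\ge4$, the value $\min\bigl\{(s+1)(\m-2)+\n+1,\ \min_{c:\,\sigma_{\mm}(c)>\mu(l,c)}(\mu(l,c)+1)(\m-3)+\n+1\bigr\}$, with case~\textit{ii} being the instance $\mu(l,c)=s$. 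For $\m=3$, every SdMS has size $\n+1$.
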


\noindent\textbf{Weighted Uncovered Set.}
With the weighted uncovered set, a candidate has to cover the necessary loser in each possible completion.
One specific candidate can cover the losing candidate in all completions. Concretely, with $l$ the losing candidate and $c_0$ the ``necessary'' covering candidate, for each candidate $c$, $\mu(c_0,c)\geq \n - \mu(c,l)$ by fixing high enough $\mu(c_0,c)$ and $\mu(c,l)$ (case~\ref{it:wucii} in Theorem~\ref{th:wucsize}, dMS $\X$ in Example~\ref{ex:wuc}). The tree of Theorem~\ref{th:wucchara} then contains one candidate.
However, in specific instances, a tree with two candidates can yield smaller dMSs (case~\ref{it:wuci} in Theorem~\ref{th:wucsize}, dMS $\Y$ in Example~\ref{ex:wuc}).  
\begin{figure}[b]
\centering
\begin{subfigure}{0.315\linewidth}
    \centering
    \begin{tikzpicture}[
align=center,
node distance=2cm and 2cm,
nodes={align=center},
inner frame sep = 0cm]
    \node [node] (a) at (90:\r) {$a$};
    \node [node] (b) at (-30:\r) {$b$};
    \node [node] (c) at (210:\r) {$c$};

    \draw [arrow] (a) to [out=-40, in=100] node[quotes] {$3$} (b) ;
    \draw [arrow] (b) to [out=140, in=-80] node[quotes] {$2$} (a) ;

    \draw [arrow] (a) to [out=-100, in=40] node[quotes] {$3$} (c) ;
    \draw [arrow] (c) to [out=80, in=-140] node[quotes] {$2$} (a) ;

    \draw [arrow] (b) to [out=200, in=-20] node[quotes] {$3$} (c) ;
    \draw [arrow] (c) to [out=20, in=160] node[quotes] {$2$} (b) ;
\end{tikzpicture}
    \caption{$G$}\label{fig:wuc1}
\end{subfigure}
~
\begin{subfigure}{0.315\linewidth}
    \centering
    \begin{tikzpicture}[
align=center,
node distance=2cm and 2cm,
nodes={align=center},
inner frame sep = 0cm]
    \node [node] (a) at (90:\r) {$a$};
    \node [node] (b) at (-30:\r) {$b$};
    \node [node] (c) at (210:\r) {$c$};

    \draw [arrow] (a) to node[quotes] {$3$} (b) ;

    \draw [arrow] (a) to node[quotes] {$3$} (c) ;

    \draw [arrow] (b) to node[quotes] {$2$} (c) ;
\end{tikzpicture}
    \caption{$\X$}\label{fig:wuc2}
\end{subfigure}
~
\begin{subfigure}{0.315\linewidth}
    \centering
    \begin{tikzpicture}[
align=center,
node distance=2cm and 2cm,
nodes={align=center},
inner frame sep = 0cm]
    \node [node] (a) at (90:\r) {$a$};
    \node [node] (b) at (-30:\r) {$b$};
    \node [node] (c) at (210:\r) {$c$};

    \draw [arrow] (a) to node[quotes] {$3$} (c) ;

    \draw [arrow] (b) to node[quotes] {$3$} (c) ;
\end{tikzpicture}
    \caption{$\Y$}\label{fig:wuc3}
\end{subfigure}
\caption{A $5$-weighted tournament G (\subref{fig:wuc1}), two dMSs for $c{\not\in}\wuc(G)$ (\subref{fig:wuc2}) and (\subref{fig:wuc3}).}
\label{fig:wuc}
\end{figure}
\begin{restatable}{theorem}{wucsize}\label{th:wucsize}
    Given $\n$ voters, a complete $\n$-weighted tournament $G =(\candidates,\mu)$ with $|\candidates| = \m$, a losing candidate $l \in \candidates\setminus{\wuc(G)}$, for each SdMS $\X$ for $l \not \in \wuc(G)$:
    \begin{enumerate}[label=\textit{\roman*.}]
        
        \item if $p=\max_{c_0,c_1} \sum_{c\in\candidates\setminus{\{l,c_0,c_1\}}}\mu(c,l)-\n(\m-3)+\left\lfloor\frac{\n}{2}\right\rfloor>0$
        with $c_0,c_1{\in}(\candidates\setminus{\{l\}})^2$ such that $c_0\neq c_1$, for $i\in\{0,1\}$, $\mu(c_i,l)\geq \frac{\n}{2}$, for all $c\in\candidates\setminus{\{l,c_0,c_1\}}$, $\mu(c_i,c)+\mu(c,l)\geq\n$, and at least one of the inequalities is strict,
        
        then $|\X|{=} \n(\m-2) +\left\lceil\frac{\n+1}{2}\right\rceil -p$\label{it:wuci}
        
        
        \item else $|\X|{=} \n(\m-2) + \left\lceil\frac{\n+1}{2}\right\rceil$\label{it:wucii}
    \end{enumerate}
    An SdMS $\X$ can be computed in polynomial time, and $|\X|\leq \n(\m-2) + \left\lceil\frac{\n+1}{2}\right\rceil$.
\end{restatable}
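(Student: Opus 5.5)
The proof rests on Theorem~\ref{th:wucchara}. Any dMS $\X$ for $l\not\in\wuc(G)$ must contain a tree $T=(K,E)$ with $K\subseteq\candidates\setminus\{l\}$ that certifies $l\in\NL{\wuc}{\X}$. By $\subseteq$-minimality, $\X$ contains nothing beyond the weights this certificate needs.

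\textbf{Cost of a certificate.} For a fixed tree, the cheapest dMS is obtained as follows.
\begin{itemize}
\item For each $c\in K$, keep $\mu_\X(c,l)\ge \n/2$.
\item For each pair $(c,c')\in K\times(\candidates\setminus\{l,c\})$ with $\{c,c'\}\notin E$, keep weights such that $\mu_\X(c,c')+\mu_\X(c',l)\ge \n$. These weights must be available in $G$.
\item Make at least one of these inequalities strict.
\end{itemize}
The weights $\mu(c',l)$ are shared by all $c\in K$. For each $c'\notin K$, it is therefore optimal to pay $\mu(c',l)$ once and then pay $\n-\mu(c',l)$ for each $c\in K$. If $|K|=k$, the candidates $c'\notin K\cup\{l\}$ cost about $\mu(c',l)+k(\n-\mu(c',l))$. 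Edges of the tree need no constraint. For $c,c'$ both in $K$ and not adjacent, the cost is $\mu(c,c')+\mu(c',c)$ plus the $l$-terms, which is at least $\n$.

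\textbf{Case $k=1$, covering candidate $c_0$.} The cost is $\sum_{c\ne l,c_0}\n+\lceil(\n+1)/2\rceil$. This works for every $c_0$ covering $l$ in $G$: pay $\mu(c_0,c)$ and $\mu(c,l)$, which sum to at least $\n$, keep $\mu(c_0,l)$, and put the strictness on the $c_0$–$l$ edge, giving $\lceil(\n+1)/2\rceil$. Such a $c_0$ exists because $l\notin\wuc(G)$. This achieves $\n(\m-2)+\lceil(\n+1)/2\rceil$, which is the bound and case \ref{it:wucii}.

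\textbf{Case $k=2$, tree $\{c_0,c_1\}$ joined by an edge.} The cost is $\lceil(\n+1)/2\rceil+\lceil \n/2\rceil$ for the two $l$-edges, hence $\n+\lfloor\n/2\rfloor$ up to strictness. Each other candidate $c$ costs $2\n-\mu(c,l)$. Subtracting the $k=1$ cost gives exactly the savings $p$ in the statement, so case \ref{it:wuci} is achieved whenever $p>0$. Polynomial-time computation follows by enumerating all $c_0$ and all pairs $c_0,c_1$.

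\textbf{Lower bound.} I expect this to be the main obstacle. I need to show that no tree with $k\ge 3$, or with nonadjacent members, does better. The plan is a charging argument:
\begin{itemize}
\item Each $c'\notin K\cup\{l\}$ costs at least $\n+(k-1)(\n-\mu(c',l))$, which is at least $\n$.
\item Each $c\in K$ pays $\ge\n/2$ on its edge to $l$.
\item The $\binom{k}{2}-(k-1)$ nonadjacent pairs inside $K$ each cost $\ge\n$.
\end{itemize}
Comparing with $k=2$, each additional member of $K$ adds at least $\n/2$ from its $l$-edge, plus $\n$ for each nonadjacent pair. Meanwhile the savings it creates are at most the $\n-\mu(c',l)$ terms. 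This should show that $k\ge3$ is never strictly better than the best of $k\le 2$. The delicate part is bookkeeping the strictness requirement and the parity of $\n$, which explains the $\lfloor\n/2\rfloor$ versus $\lceil(\n+1)/2\rceil$ terms. If the direct comparison fails, I would first try an exchange argument: drop a leaf of $T$ and show the induced certificate is no more expensive.

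Finally, minimality must hold. The constructed supports are minimal, because lowering any kept weight violates every tree certificate.
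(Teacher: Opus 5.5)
Your upper-bound constructions match the paper's: a single covering candidate $c_0$ for case ii, an adjacent pair $c_0,c_1$ for case i, and enumeration for polynomial time. The gap is the lower bound, which is the actual content of ``$|\X|=\dots$ for every SdMS''. You only outline it, and the claim it is meant to establish is false when $n$ is even. That claim is that a certificate tree with $|K|\ge 3$ is never cheaper than the best one with $|K|\le 2$. Your charges are also not disjoint: $\mu(c,l)$ for $c\in K$ is counted both in the $\frac{n}{2}$ charge and in every non-adjacent-pair charge involving $c$. The failure lies exactly in the strictness bookkeeping you flag. For two non-adjacent tree vertices $a,b$, the constraints $\mu(b,a)+\mu(a,l)\ge n$ and $\mu(a,b)+\mu(b,l)\ge n$ already pay for $\mu(a,l)$. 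Shifting one unit from $\mu(b,a)$ to $\mu(a,l)$ therefore makes $\mu(a,l)\ge\frac{n}{2}$ strict for free, while your count (and the paper's) charges an extra unit for strictness.

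Concretely, take $n=2$ and $\candidates=\{l,a,b,d\}$, with $\mu(a,l)=\mu(a,b)=\mu(a,d)=2$ and every other pair split $1$--$1$. Then $a$ strictly covers $l$. The admissible pairs $\{a,b\}$ and $\{a,d\}$ give $p=0$, and $\{b,d\}$ is inadmissible because all its inequalities are tight. The statement therefore predicts $|\X|=n(m-2)+\lceil\frac{n+1}{2}\rceil=6$. Now consider the partial tournament with $\mu(a,l)=2$, $\mu(b,l)=\mu(d,l)=1$, $\mu(a,b)=1$, of size $5$. It satisfies Theorem~\ref{th:wucchara} with the path $\{a,d\},\{d,b\}$, with strictness coming from $\mu(a,l)=2>1$. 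And $l$ is indeed a necessary loser. If $a$ does not cover $l$, then $\mu(l,d)=1$ and $\mu(d,a)=2$. So $d$ covers $l$ unless $\mu(l,b)=1$ and $\mu(b,d)=2$, in which case $b$ does. Hence no argument can give your lower bound. The paper's own argument has the same hole: for $|K|\ne 2$ it adds the strictness unit on top of $m-1$ variable-disjoint inequalities, tacitly assuming the strict inequality is one of them. There is a further problem. In the same $G$, the size-$4$ partial tournament $\mu(a,l)=2$, $\mu(b,l)=\mu(d,l)=1$ already makes $l$ a necessary loser. If $a$ fails, say $\mu(b,a)=2$; then $b$ covers $l$ unless $\mu(d,b)=2$, and then $d$ does. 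Yet Theorem~\ref{th:wucchara} yields no certificate with a strict inequality here, so for even $n$ the reduction you build on is itself unsound. Both phenomena need $n$ even, since for odd $n$ the strictness requirement is automatic.
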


We illustrate the non-trivial case \ref{it:wuci} with the following example.

\begin{example}\label{ex:wuc}
Consider the $5$-weighted tournament $G$ in Figure~\ref{fig:wuc1}. In the case of $\X$ in Figure~\ref{fig:wuc2}, $a$ weighted covers $c$ across all completions. However, we can allow the covering candidate to vary dependently of the completion as in the dMS $\Y$ in Figure~\ref{fig:wuc3}. If a majority of voters prefers $a$ to $b$ then $a$ covers $c$ else $b$ covers $c$. 
Here, $p=\sum_{x\in\candidates\setminus{\{a,b,c\}}}\mu(x,c)-\n(\m-3)+\left\lfloor\frac{\n}{2}\right\rfloor=0+0+2$. Hence $\Y$ contains $2$ less comparisons than $\X$ ($|\X|=8$, $|\Y|=6$).
\end{example}


\noindent\textbf{Uncovered Set.}
The analogous result for the uncovered set is simpler than Theorem~\ref{th:wucsize} since case~\ref{it:wuci} cannot occur as $\sum_{c\in\candidates\setminus{\{l,c_0,c_1\}}}\mu(c,l) \leq \m-3$ and $p\leq 0$.

 \begin{restatable}{corollary}{ucsize}\label{cor:ucsize}
    Given a complete tournament $G =(\candidates,\mu)$ with $|\candidates| = \m$, and a losing candidate $l \in \candidates\setminus{\uc(G)}$, for each SdMS $\X$ for $l \not \in \uc(G)$, we have $|\X|{=} \m-1$.
\end{restatable}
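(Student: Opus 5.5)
We specialize Theorem~\ref{th:wucsize} to $\n=1$ and show that case~\ref{it:wuci} never occurs, so every SdMS has the case~\ref{it:wucii} size $\n(\m-2)+\left\lceil\frac{\n+1}{2}\right\rceil$. With $\n=1$ this gives $|\X| = (\m-2)+\left\lceil\frac{2}{2}\right\rceil = \m-1$. Two checks are needed: that the tournament solution $\wuc$ on $1$-weighted tournaments is exactly $\uc$, and that $p\leq 0$ for $\n=1$.

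\textbf{Step 1: $\wuc$ coincides with $\uc$ on $1$-weighted tournaments.} This holds by definition, since the uncovered set was introduced as the unweighted case of $\wuc$. Consequently the partial tournaments below $G$, their completions, the necessary losers, and hence the dMSs and SdMSs for $l\not\in\uc(G)$ are the same objects as in Theorem~\ref{th:wucsize} with $\n=1$.

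\textbf{Step 2: the quantity $p$ is non-positive.} Fix any admissible pair $c_0\neq c_1$. Since $\mu(c,l)\in\{0,1\}$, we have
$\sum_{c\in\candidates\setminus\{l,c_0,c_1\}}\mu(c,l)\leq \m-3$.
Together with $\left\lfloor\frac{1}{2}\right\rfloor=0$, this gives
$p \leq (\m-3)-(\m-3)+0 = 0$.
So the hypothesis $p>0$ of case~\ref{it:wuci} fails. This bound holds for every pair, so it also holds whenever no admissible pair exists and the maximum is vacuous.

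\textbf{Step 3: case~\ref{it:wucii} applies.} By Step~2, case~\ref{it:wucii} of Theorem~\ref{th:wucsize} gives $|\X|=\m-1$ for each SdMS $\X$.

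\textbf{Sanity check for small $\m$.} When $\m=2$, the single comparison $\mu(c,l)=1$ is indeed an SdMS of size $1=\m-1$, which confirms that the theorem's case split is well defined even when $\candidates\setminus\{l,c_0,c_1\}$ is empty.

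The only step needing care is the legitimacy of the specialization in Step~1, namely that the proof of Theorem~\ref{th:wucsize} does not implicitly assume $\n\geq 2$ anywhere, for instance through the non-strict condition $\mu(c_i,l)\geq\frac{\n}{2}$. I expect no obstacle there: for $\n=1$ that condition simply forces $\mu(c_i,l)=1$. Everything else is arithmetic.
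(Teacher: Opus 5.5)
Your proposal is correct and follows the paper's proof: both specialize Theorem~\ref{th:wucsize} to $\n=1$ and note that $\sum_{c\in\candidates\setminus\{l,c_0,c_1\}}\mu(c,l)\leq\m-3$ and $\left\lfloor\frac{1}{2}\right\rfloor=0$, so $p\leq 0$ and case~\ref{it:wuci} never applies, leaving case~\ref{it:wucii} with $|\X|=(\m-2)+1=\m-1$. Your remarks that $\wuc$ reduces to $\uc$ when $\n=1$, and that case~\ref{it:wuci} is also excluded when no admissible pair exists, only make explicit what the paper leaves implicit.
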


\section{Conclusion}

In this paper we defined and studied destructive smallest minimal supports for tournament solutions, as abductive explanations for why a given candidate loses an election. 
We showed that computing destructive SMSs is generally easier than computing their constructive version, in contrast with the results of Contet et al.~\cite{contet2026explaining} on the weighted uncovered set (with the open question remaining for Borda and Copeland). 
We also showed that destructive SMSs tend to be smaller than constructive ones. 
The size of the SdMSs is in $\mathcal{O}(\m^2)$ for the top cycle rule and in $\mathcal{O}(\n\m)$ for the remaining rules, in contrast
with the constructive case, where for the Borda or Copeland rule almost the whole tournament is necessary when all candidates are tied.

In order to find simple explanations, in this paper we put the emphasis on minimizing the size of minimal supports, measured as the total number of pairwise comparisons. 
We used size as a proxy for simplicity of the structure of dMSs, which in turn should result in a more accessible explanation.
However, depending on the specific application one might optimize for different metrics on dMSs. Further work, such as empirical studies, is needed to investigate which explanations are preferred by users.


\section*{Acknowledgments}

The authors thank the reviewers of ADT26 for their constructive comments and suggestions, which helped improve this paper.

This work is funded by the European Union. Views and opinions expressed are however those of the authors only and do not necessarily reflect those of the European Union or the European Research Council Executive Agency. Neither the European Union nor the granting authority can be held responsible for them. This work is supported by ERC grant 101166894 “Advancing Digital Democratic Innovation” (ADDI).

\begin{figure}
\centering
\includegraphics[height=1.5cm]{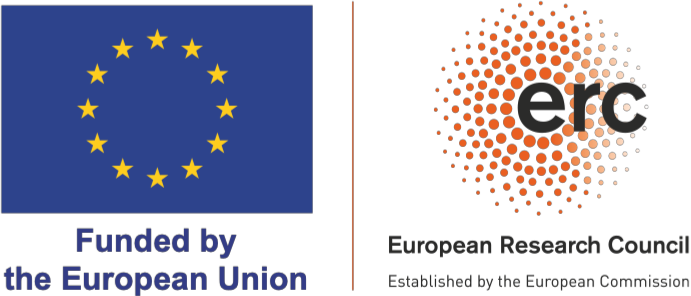}
\end{figure}

%
%
\bibliographystyle{splncs04}
\bibliography{biblio}

\newpage 

\appendix

\section*{Appendix}\label{sec:appendix}

\section{Proofs for Section~\ref{sec:pw}}

\subsection{Proofs for Section~\ref{sec:pwtc}}

\tcchara*

\begin{proof}
    Given a partial tournament $G =(\candidates,\mu)$, a candidate $l \in \candidates$.
    
    ($\implies$) We proceed by contrapose.
    Suppose for all $K\subseteq\candidates\setminus{\{l\}}$ with $K\neq \varnothing$, exists a pair $(c,c')\in K\times\candidates\setminus{K}$, $\mu(c,c')=0$.
    If $l\in\NL{\tc}{G}$, for each completion $\X$ of $G$, let $A_\X= \{c : c\in\candidates, l \text{ cannot reach } c \text{ in } \X\}$ be the set of candidates $l$ cannot reach in $\X$, $A_\X\neq \varnothing$.
    Let $\X_0=(\candidates,\mu_0)$ be such that $\X_0= \argmin_{\X\in [G]} |A_\X|$.
    Since $A_{\X_0}\subseteq\candidates\setminus{\{l\}}$ and $A_{\X_0}\neq \varnothing$, exists a pair $(c,c')\in A_{\X_0}\times\candidates\setminus{A_{\X_0}}$, $\mu(c,c')=0$.
    Let $\X_1=(\candidates,\mu_1)$ be a completion of $G$ such that $\mu_1(c,c')=0$, $\mu_1(c',c)=1$ and $\mu_1=\mu_0$ everywhere else.
    Clearly, for all $c\in \candidates\setminus{A_{\X_0}}$, $c\in \candidates\setminus{A_{\X_1}}$
    In particular, since $c'\in \candidates\setminus{A_{\X_0}}$, $c'\in \candidates\setminus{A_{\X_1}}$, i.e. $l$ can reach $c'$ in $\X_1$.
    Additionally, $\mu_1(c',c)=1$. Hence, $l$ can reach $c$, i.e., $c\in\candidates\setminus{A_{\X_1}}$.
    Hence $(\candidates\setminus{A_{\X_0}}) \subsetneq (\candidates\setminus{A_{\X_1}})$ or equivalently, $A_{\X_1} \subsetneq A_{\X_0}$.
    Contradiction. 
    Thus $l\not\in\NL{\tc}{G}$.
    
    ($\impliedby$) Suppose exists $K\subseteq\candidates\setminus{\{l\}}$ with $K\neq \varnothing$ such that for each pair $(c,c')\in K\times\candidates\setminus{K}$, $\mu(c,c')=1$ then clearly for each completion $X$ of $G$, $l$ cannot reach candidates in $K$ thus $l\not\in\tc(\X)$ and $l\in\NL{\tc}{G}$. \qed
\end{proof}

\subsection{Proofs for Section~\ref{sec:pwbo}}

\bochara*

\begin{proof}
    Schwartz studied this problem from the angle of network flow theory. The result was initially stated as follows in~\cite{schwartz1966possible}.
    
    Given a partial tournament, for any candidate $l \in \candidates$ and subset $K \subseteq \candidates\setminus{\{l\}}$, let $N(l,K)$ be the algebraic sum of half game behind by which $l$ trails each candidate of $K$, $P(l)$ be the number of games $l$ has still to play and $P(K)$ be the number of games the candidates of $K$ still have to play, omitting any in which they face one another.
    We have that for $l$ to be eliminated it is necessary and sufficient that exists a subset $K$ such that $$N(l,K)-kP(l)-P(K)> 0.$$

    We now show how to reach the new formulation from the original one.
    Given $\n$ voters and a partial $\n$-weighted tournament $G =(\candidates,\mu)$ with $|\candidates| = \m$, the half game behind score of a candidate $c$ is defined as $H(c) = \n(\m-1) + \sum_{c'\in \candidates} \mu(c,c') - \sum_{c'\in \candidates} \mu(c',c)$. Each of the $\n(\m-1)$ games that $c$ will have to play grants by default one half game, if $c$ wins the game then it grants a total of two half games (one half game plus one half game) and if $c$ loses that game grants no half game at all (one half game minus one half game). Then we simply have $N(l,K) = \sum_{c\in K} (H(c) - H(l))$.
    The number of game $c$ still has to play is naturally the total number of games $c$ has to play minus those $c$ already played, i.e., won or lost, hence $P(c) = \n(\m-1) - \sum_{c'\in \candidates} (\mu(c,c') +  \mu(c',c))$. It is the same idea for $P(K)$ but we only consider games against candidates not in $K$, $P(K) = \sum_{c\in K}\sum_{c'\not\in K} (\n - \mu(c,c') - \mu(c',c))$.

    \begin{align*}
        & N(l,K)-kP(l)-P(K)> 0 \\
        \iff & \sum_{c\in K} (H(c) - H(l)) - k\left(\n(\m-1) - \sum_{c\in \candidates} (\mu(l,c) +  \mu(c,l))\right) \\ & - \sum_{c\in K}\sum_{c'\not\in K} (\n - \mu(c,c') - \mu(c',c)) > 0\\
        \iff & \sum_{c\in K} \left(\sum_{c'\in \candidates} (\mu(c,c') - \mu(c',c)) - \sum_{c'\in \candidates} (\mu(l,c') - \mu(c',l))\right) \\ & - k\left(\n(\m-1) - \sum_{c\in \candidates} (\mu(l,c) +  \mu(c,l))\right) - \sum_{c\in K}\sum_{c'\not\in K} (\n - \mu(c,c') - \mu(c',c)) > 0\\
        \iff & \sum_{c\in K} \sum_{c'\in \candidates} (\mu(c,c') - \mu(c',c)) - k\sum_{c'\in \candidates} (\mu(l,c') - \mu(c',l)) - k\n(\m-1) \\ & - k\sum_{c\in \candidates} (\mu(l,c) +  \mu(c,l)) - \sum_{c\in K}\sum_{c'\not\in K} (\n - \mu(c,c') - \mu(c',c)) > 0\\
        \iff & \sum_{c\in K} \sum_{c'\in \candidates} (\mu(c,c') - \mu(c',c)) + 2k\sum_{c\in \candidates} \mu(c,l) - k\n(\m-1) \\ & - k(\m-k)\n + \sum_{c\in K}\sum_{c'\not\in K} (\mu(c,c') + \mu(c',c)) > 0\\
    \end{align*}

    In the first double sum, observe that if $c'\in K$, then both $c$ and $c'$ belong to $K$ and both $\mu(c,c') - \mu(c',c)$ and $\mu(c',c) - \mu(c,c')$ appear in the sum. 
    Additionally, remember that $\mu(c,c)=0$ so the case where $c=c'$ can be ignored. Hence, when $c'$ is in $K$, the pairs cancel in the sum and we have:

    \begin{align*}
        & \sum_{c\in K} \sum_{c'\in \candidates} (\mu(c,c') - \mu(c',c)) + 2k\sum_{c\in \candidates} \mu(c,l) - \n k(2\m-1-k) \\ & + \sum_{c\in K}\sum_{c'\not\in K} (\mu(c,c') + \mu(c',c)) > 0\\
        \iff & \sum_{c\in K} \sum_{c'\not\in K} (\mu(c,c') - \mu(c',c)) + 2k\sum_{c\in \candidates} \mu(c,l) - \n k(2\m-1-k) \\ & + \sum_{c\in K}\sum_{c'\not\in K} (\mu(c,c') + \mu(c',c)) > 0\\
        \iff & 2\sum_{c\in K} \sum_{c'\not\in K} \mu(c,c') + 2k\sum_{c\in \candidates} \mu(c,l) - \n k(2\m-1-k) > 0\\
        \iff & \sum_{c\in K}\sum_{c'\not\in K}\mu(c,c') + k\sum_{c\in\candidates}\mu(c,l) >\frac{1}{2}\n k(2\m-1-k)\\
        \iff & \sum_{c\in K}\sum_{c'\not\in K}\mu(c,c') + k\sum_{c\in\candidates}\mu(c,l) > k\n(\m-1) - \n \frac{k(k-1)}{2}
    \end{align*}

    Additionally, observe that since the maximum total number of pairwise comparisons a candidate can win in any completion $G'\in [G]$ is the total number of times it is compared less the number of pairwise comparisons it loses in $G$ we have:
    $$\max_{G'\in [G]} \sigma_{\borda}(l,G') = \n(\m-1) -\sum_{c\in\candidates}\mu(c,l).$$

    Similarly, the minimum total number of pairwise comparisons a candidate can win in any completion $G'\in [G]$ is the number of comparisons it wins in $G$. Moreover, independently of the outcome, a comparison between to candidate in a coalition always represent both a win and a loss. Only wins from members of the coalition over non-members increase the average Borda score. Hence we have:
    $$\min_{G'\in [G]} \mathop{{}\mathbb{E}}_{c\in K} \sigma_{\borda}(c,G') =\n \frac{k-1}{2} + \frac{1}{k}\sum_{c\in K}\sum_{c'\not\in K}\mu(c,c').$$

    Finally we have:

    $$N(l,K)-kP(l)-P(K)> 0
    \iff \min_{G'\in [G]} \mathop{{}\mathbb{E}}_{c\in K} \sigma_{\borda}(c,G') > \max_{G'\in [G]} \sigma_{\borda}(l,G').$$ \qed
\end{proof}

\subsection{Proofs for Section~\ref{sec:pwmmwuc}}

The proof of Theorem~\ref{th:wucchara} follows a similar approach to the one of Theorem~\ref{th:mmchara}. We show that a candidate is not weighted covered by any other one if and only if there exists a perfect matching in a specific bipartite graph. Note that this is not sufficient as a weighted covered candidate can still be in the uncovered set as long as it is not \emph{strictly} weighted covered. Typically, when two candidates weighted cover each other.

\wucchara*
\begin{proof}
    We proceed in a similar fashion to the proof of Theorem~\ref{th:mmchara}, we reduce the problem of determining if a candidate is weighted covered by a candidate in any completion to the problem of finding a perfect matching of a specific undirected unweighted bipartite graph. 

    Given $\n$ voters, a partial $\n$-weighted tournament $G =(\candidates,\mu)$ with $|\candidates| = \m$, and a candidate $l \in \candidates$.
    
    Given a completion $G'=(\candidates,\mu')$ of $G$, $l$ is a $\wuc$-winner if and only if it is not \emph{strictly} weighted covered that is that there is no $c_0\in \candidates$ such that $\mu'(c_0,l)\geq\mu'(l,c_0)$ and for all $c\in\candidates\setminus{\{l,c_0\}}$ $\mu'(c_0,c)\geq \mu'(l,c)$ and at least one of this inequalities is strict.

    We start by identifying the relaxed case where $l$ is not (simply) weighted covered in each completion $G'=(\candidates,\mu')$ of $G$, i.e., where for each candidate $c\in \candidates\setminus{\{l\}}$ we either have $\mu'(c,l) < \mu'(l,c)$ or a distinct candidate $c'\in \candidates\setminus{\{l,c\}}$ such that $\mu'(c,c') < \mu'(l,c')$. 

    Let the graph $H=(V_H,E_H)$ be such that $V_H=\candidates$ and $E_H=\{(c,c'):\forall (c,c')\in \left(\candidates\setminus{\{l\}}\right)^2,\,c\neq c' \land \mu(c,c') < \n - \mu(c',l)\}\cup\{(c,l):\forall c\in\candidates\setminus{\{l\}},\, \mu(c,l) < \left\lceil\frac{\n}{2}\right\rceil\}$ and $I_H^l=(X,Y,Z)$ be the $l$-deficient incidence graph of $H$.

    If $c$ is matched with $\{c,c'\}$ in $I_H^l$ then if $c'=l$ then $G$ is completed in $G'$ such that $\mu'(l,c) \geq \left\lceil\frac{\n+1}{2}\right\rceil$ which ensures that $l$ is strictly preferred to $c$ else when $c'\neq l$, $G$ is completed such that $\mu'(l,c') > \n - \mu'(c',c)$ which ensures that $l$ is strictly more preferred to $c'$ than $c$ is preferred to $c'$. In both cases, $l$ is not weighted covered by $c$.
    
    We now show that $l$ is not weighted covered by a candidate in any completion of $G$ if and only if there exists an $X$-perfect matching of in $I_H^l$.

    $(\implies)$ Suppose that $l$ is not weighted covered by a candidate in each completion of $G$, then there exists a completion $G'=(\candidates,\mu')$ of $G$ where $l$ is not weighted covered by any other candidate. 
    Thus, in $G'$, for all candidate $c$ distinct from $l$, 
    either $\mu'(l,c) > \left\lceil\frac{\n}{2}\right\rceil$ thus $\mu(c,l) \leq \mu'(c,l) = \n - \mu'(l,c) < \n - \left\lceil\frac{\n}{2}\right\rceil \leq \left\lceil\frac{\n}{2}\right\rceil$ and $c$ is matched with $\{c,l\}$ in $H$ or there exists $c'\in\candidates\setminus{\{l,c\}}$ such that $\mu'(l,c') > \mu'(c,c')$ which gives us $\n - \mu(c',l) \geq \n - \mu'(c',l) = \mu'(l,c') > \mu'(c,c') \geq \mu(c,c')$ and $c$ is matched with $\{c,c'\}$ in $H$. 
    Suppose $\{c,c'\}$ has been matched both to $c$ and $c'$, then we have that $\mu'(l,c') > \mu'(c,c')$ and $\mu'(l,c) > \mu'(c',c)$. 
    Hence, $\mu'(l,c') + \mu'(l,c) > \mu'(c,c') + \mu'(c',c) = \n$ and $\mu'(l,c)> \left\lceil\frac{\n}{2}\right\rceil$ or $\mu'(l,c')>\left\lceil\frac{\n}{2}\right\rceil$.
    Without loss of generality, suppose $\mu'(l,c)> \left\lceil\frac{\n}{2}\right\rceil$.
    Thus $\{c,\{c,l\}\}\in E_H$ which means that we can unmatch $c$ with $\{c,c'\}$ and match it with $\{c,l\}$. 
    This result in a $X$-perfect matching in $I_H^l$.
    
    $(\impliedby)$ Suppose there exists a $\X$-perfect matching in $I_H^l$, then for each candidate $c\in X$, exists $c'\in\candidates$ such that $c$ is matched with $\{c,c'\}$. 
    Now, let $G'=(\candidates,\mu')$ be a completion of $G$ where $l$ wins all its remaining comparisons, i.e., such that for all $c\in\candidates$, $\mu'(l,c)=\n -\mu(c,l)\geq \mu(l,c)$, and where $\mu'(c,c')=\mu(c,c')$ if $\{c,\{c,c'\}\}$ is part of the matching and where the rest of $G$ is completed arbitrarily.
    For each candidate $c$ distinct from $l$, if $\{c,\{c,l\}\}$ is part of the matching then $\{c,\{c,l\}\}\in E_H$ and $\mu(c,l)<\left\lceil\frac{\n}{2}\right\rceil$ else 
    there exists $c'\neq l$ such that $\{c,\{c,c'\}\}$ is part of the matching then $\{c,\{c,c'\}\}\in E_H$ and $\mu'(c,c')=\mu(c,c')<\mu(l,c')\leq \mu'(l,c)$, i.e., in both case $l$ is not weighted covered by $c$.
    Hence $l$ is not weighted covered by any other candidate in $G'$ which is a completion of $G$.

    Applying Lemma~\ref{lem:perfectmatchingcharadef}, we have that $l$ is a not weighted covered by a candidate in each completion $G$ if and only if $H$ has an acyclic connected component which does not contain $l$.
    Finally, by construction, $H$ has an acyclic connected component which does not contain $l$ if and only if there exists a tree $T=(K,E)$ with $K\subseteq\candidates\setminus{\{l\}}$ such that for each $c\in K$, $\mu(c,l)\geq\left\lceil\frac{\n}{2}\right\rceil$ and for each pair $(c,c')\in K\times\candidates\setminus{\{l\}}$ with $c\neq c'$, if $\{c,c'\}\not\in E$ then $\mu(c,c') + \mu(c',l)\geq\n$.

    Recall that $l\in\NL{\wuc}{G}$ when $l$ is strictly weighted covered in each completion of $G$.
    Since we know when $l$ is (simply) weighted covered in each completion of $G$, we now need to identify the sub-cases in which it is always strictly weighted covered.

    We show that it is necessary and sufficient to simply add the constraint that at least one of the inequalities in the tree of the previous characterization has to be strict.

    Suppose that no inequality is strict. 
    Then for each $c\in K$, $\mu(l,c)=\frac{\n}{2}$ and and for each $c'\in \candidates\setminus{\{l,c\}}$, if $\{c,c'\}\not\in E$ then $\mu(c,c') + \mu(c',l)=\n$.
    Let $G'=(\candidates,\mu')$ be a completion of $G$ where $l$ wins all its remaining comparisons, i.e., such that for all $c\in\candidates$, $\mu'(l,c)=\mu(l,c)=\frac{\n}{2}$, and where for each pair $(c,c')\in K\times\candidates\setminus{\{l\}}$ with $c\neq c'$, if $\{c,c'\}\in E$ then $\mu'(c,c')=\mu(c,c')=\frac{\n}{2}$ else $\mu(c,c') = \n - \mu(c',l) = \frac{\n}{2}$.
    It is clear that $l$ is not strictly weighted covered by any candidate in $K$.

    Now suppose that one of the inequalities in $T_0=(K_0,E_0)$ is strict. 
    Let us assume that it is associated to candidate $c_0 \in K$.
    If $T_0$ only contains one vertex, then $c_0$ strictly weighted covers $l$. 
    Else for $l$ not to be weighted covered by $c_0$, $G$ has to be extended in another partial tournament $G_1=(\candidates,\mu_1)$ where exists $c_1 \in K_0$, $c_1\neq c_0$ such that $\{c_0,c_1\}\in E_0$ where $\mu_1(c_0,c_1)<\mu_1(l,c_1)$. 
    Thus $\mu_1(c_1,c_0) \geq \n - \mu_1(c_0,c_1) > \n - \mu_1(l,c_1)$ and we have a new strict inequality $\mu_1(c_1,c_0) + \mu_1(l,c_1) > \n$.
    This means that exists $T_1=(K_1,E_1)$ a sub-tree of $T_0$ with $K_1\subseteq K_0\setminus{\{c_0\}}$ and $E_1\subsetneq E_0$ such that for each $c\in K_1$, $\mu_1(c,l)\geq\frac{\n}{2}$ and for each $c'\in \candidates\setminus{\{l,c_0,c\}}$, if $\{c,c'\}\not\in E_1$ then $\mu(c,c') + \mu(c',l)\geq\n$ where exists a strict inequality.
    This way we build a strictly decreasing sequence of trees with at least one strict inequality.
    Since we are working with a finite number of candidate, we end up with a tree restricted to one vertex which strictly weighted covers $l$ in each completion. \qed
\end{proof}

\section{Proofs for Section~\ref{sec:dmssize}}

\tcsize*

\begin{proof}
    Given a directed graph $G =(\candidates,\mu)$.
    A subset of vertices $K\in C$ is a \emph{strongly connected component} if: (\textit{i}) for each pair $(x,y)\in K^2$ with $x\neq y$, there exist a path from $x$ to $y$ and (\textit{ii}) $K$ is maximal, in the sense that no vertex can be added without violating condition (\textit{i}).

    Given a complete tournament $G =(\candidates,\mu)$ with $|\candidates| = \m$, and a losing candidate $l \in \candidates\setminus{\tc(G)}$,
    Let $\X=(\candidates,\mu_\X)$ be a dMS for $l \not \in \tc(G)$.
    
    Let $\sim$ be the equivalence relation on $\candidates$ defined by $x\sim y$ if and only if there are directed paths from $x$ to $y$ and $y$ to $x$.
    The equivalence classes of $\sim$ are the strongly connected components of $G$.
    We define the condensation graph $\widetilde{G}=(\widetilde{\candidates},\widetilde{\mu})$, i.e., the quotient graph of $G$ by $\sim$ with
    $\widetilde{\candidates}=\candidates/\sim$ and for $(x,y)\in{\widetilde{\candidates}}^2$ with $x\neq y$, $\widetilde{\mu}(x,y)=1$ if and only if exist $(u,v)\in x\times y$ such that $\mu(u,v)=1$.
    $\widetilde{G}$ is naturally a directed acyclic graph.
    However, since $G$ is complete, $\widetilde{G}$ is complete.
    Hence $\succ$, the order induced by $\widetilde{G}$ on the strongly connected components of $G$ is total.

    Let $p$ be the number of strongly connected components of $G$.
    Since $l \not \in \tc(G)$, $l$ cannot reach all candidates in $G$ and $p\geq 2$.
    Let $\{A_1,A_2,\dots,A_p\}$ be the set of strongly connected components of $G$ such that $A_1\succ A_2\succ \dots \succ A_p$.
    Let $2\leq i_l \leq p$ be such that $l\in A_{i_l}$.

    According to Theorem~\ref{th:tcchara}, exists $K\subseteq\candidates\setminus{\{l\}}$ with $K\neq \varnothing$ such that for each pair $(c,c')\in K\times\candidates\setminus{K}$, $\mu(c,c')=1$.
    But for each strongly connected component $A$, either $A\subseteq K$ or $A\cap K = \varnothing$ otherwise some candidates in $A$ could not reach other candidates in $A$.
    And for $1\leq i \leq p$, if $A_i\subseteq K$ then for all $1\leq j\leq i$, $A_j\subseteq K$ else there would exist $(A,A')\in\{A_1,A_2,\dots,A_p\}^2$ with $A\succ A'$ such that exists $(x,y)\in A\times A'$ such that $\mu(y,x)=1$.
    Hence, the only possibilities for $K$ are $\bigcup_{j=1}^i A_j$ with $1\leq i\leq i_l-1$.

    Additionally, to split $G$ between $K$ and its complementary $|K|(|\candidates|-|K|)$ pairwise comparisons are needed.
    Let the function $f:x\mapsto x(\m-x)$ be defined on $[|A_1|,|\bigcup_{j=1}^{i_l-1} A_j|]$. $f$ is clearly concave and thus reaches its minimum either in $x=|A_1|$ or $x=|\bigcup_{j=1}^{i_l-1} A_j|$.
    Since $A_1=\tc(G)$, $\bigcup_{j=i_l}^{p} A_j =\m - \bigcup_{j=1}^{i_l-1} A_j$ and $\bigcup_{j=i_l}^{p} A_j=\{c : c\in\candidates, l \text{ can reach } c \text{ in } G\}$,
    $|\X|= \displaystyle \min_{t\in \{\alpha, \beta\}} t(\m-t)$ where $\alpha = |\tc(G)|$ and $\beta = |\{c : c\in\candidates, l \text{ can reach } c \text{ in } G\}|$.

    If $|A_1|(\m-|A_1|) \leq |\bigcup_{j=1}^{i_l-1} A_j|(\m-|\bigcup_{j=1}^{i_l-1} A_j|)$ then
    let $\X=(\candidates,\mu_\X)$ be partial sub-tournament of $G$ such that for all $(c,c')\in A_1\times (\candidates\setminus{A_1})$, $\mu_\X(c,c')=1$ and $\mu_\X$ is null everywhere else.
    According to Theorem~\ref{th:tcchara}, $l\in \NL{\tc}{G}$.
    Since $|\X|=|A_1|(\m-|A_1|)$ matches the lower bound, $\X$ is a smallest dMS for $l\not\in\tc(G)$.
    The same reasoning holds when $|A_1|(\m-|A_1|) \geq |\bigcup_{j=1}^{i_l-1} A_j|(\m-|\bigcup_{j=1}^{i_l-1} A_j|)$.

    It naturally follows from the beginning of the proof, that an SdMS can be computed in polynomial time.
    Additionally, by concavity and symmetry, the worst case is reached when candidates are split in two equal parts, i.e., when $|TC(G)|=\left\lceil\frac{\m}{2}\right\rceil$ or $|TC(G)|=\left\lfloor\frac{\m}{2}\right\rfloor$ and the losing candidate is in $A_2$. In that case, $|\X| = \left\lceil\frac{\m}{2}\right\rceil\left\lfloor\frac{\m}{2}\right\rfloor$. \qed
\end{proof}

\bosize*

\begin{proof}
    We start by proving that $\left\lceil\frac{\n(\m-1)+1}{2}\right\rceil$ comparisons are enough when $l$ has below average performances, i.e., $\sigma_{\borda}(l) \leq \left\lfloor\frac{\n(\m-1)-1}{2}\right\rfloor$.

    Given $\n$ voters, a complete $\n$-weighted tournament $G =(\candidates,\mu)$ with $|\candidates| = \m$, a losing candidate $l \in \candidates\setminus{\borda(G)}$. Suppose that $\sigma_{\borda}(l,G') \leq \left\lfloor\frac{\n(\m-1)-1}{2}\right\rfloor$, then $\sum_{c\neq l} \mu_G(c,l) = \n(\m-1) - \sigma_{\borda}(l,G) \geq \left\lceil\frac{\n(\m-1)+1}{2}\right\rceil$. Let $\X \subseteq G$ such that $\sum_{c\neq l} \mu_\X(c,l) = \left\lceil\frac{\n(\m-1)+1}{2}\right\rceil$ and $\mu_\X$ is null otherwise. Clearly, $|\X| = \left\lceil\frac{\n(\m-1)+1}{2}\right\rceil$.
    Then for all complete tournaments $G'$ completing $\X$ we have $\sigma_{\borda}(l,G') = \n(\m-1) - \sum_{c\neq l} \mu_{G'}(c,l) \leq \n(\m-1) - \sum_{c\neq l} \mu_\X(c,l) = \n(\m-1) - \left\lceil\frac{\n(\m-1)+1}{2}\right\rceil = \left\lfloor\frac{\n(\m-1)-1}{2}\right\rfloor$. Moreover, $\sum_{c\neq l} \sigma_{\borda}(c,G') = \n\frac{\m(\m-1)}{2} - \sigma_{\borda}(l,G') = \n\frac{\m(\m-1)}{2} -\left\lfloor\frac{\n(\m-1)-1}{2}\right\rfloor = \n\frac{(\m-1)(\m-1)}{2} + \left\lceil\frac{1}{2}\right\rceil = \n\frac{(\m-1)(\m-1)}{2} +1$ and by the pigeonhole principle, among the $\m-1$ candidates distinct from $l$, there exists $c_0$ such that $\sigma_{\borda}(c_0,G') \geq \left\lceil\frac{\sum_{c\neq l} \sigma_{\borda}(c,G')}{\m-1}\right\rceil = \left\lceil\frac{\n(\m-1)}{2} + \frac{1}{\m-1} \right\rceil$. Hence, $\sigma_{\borda}(c_0,G') > \sigma_{\borda}(l,G')$ and $l$ is a necessary loser for $\X$ and $\X$ contains a dMS.

    Let us now prove that $\n(\m-1)+1 - \max_{c\in A} \mu(c,l)$ comparisons are always enough.

    Given $\n$ voters, a complete $\n$-weighted tournament $G =(\candidates,\mu)$ with $|\candidates| = \m$, a losing candidate $l \in \candidates\setminus{\borda(G)}$. Let $A=\{c:c\in\candidates, \sigma_{\borda}(c,G) > \sigma_{\borda}(l,G)\}$ and $c_0 = \argmax_{c \in A} \mu_G(c,l)$. Then, $\sum_{c\neq c_0} \mu_G(c_0,c) + \sum_{c\neq l} \mu_G(c,l) = \sigma_{\borda}(c_0,G) + \n(\m-1) - \sigma_{\borda}(l,G) \geq \sigma_{\borda}(l,G) + 1 + \n(\m-1) - \sigma_{\borda}(l,G) = \n(\m-1) + 1$.
    Let $\X \subseteq G$ be such that $\mu_\X(c_0,l) = \mu_G(c_0,l)$, $\sum_{c\neq l} \mu_\X(c_0,c) + \sum_{c\neq l} \mu_\X(c,l) = \n(\m-1) + 1$ and $\mu_\X$ is null otherwise. $|\X| = \n(\m-1) + 1 - \mu_\X(c_0,l) = \n(\m-1)+1 - \max_{c\in A} \mu(c,l)$.
    Then for all complete tournaments $G'$ completing $\X$ we have $\sum_{c\neq l} \mu_{G'}(c_0,c) + \sum_{c\neq l} \mu_{G'}(c,l) \geq \n(\m-1) + 1 \implies \sigma_{\borda}(c_0,G') + \n(\m-1) - \sigma_{\borda}(l,G') \geq \n(\m-1) + 1 \implies \sigma_{\borda}(c_0,G') > \sigma_{\borda}(l,G')$ and $l$ is a necessary loser for $\X$ and $\X$ contains a dMS.

    We now move on the lower bounds.

    Given $\n$ voters, a complete $\n$-weighted tournament $G =(\candidates,\mu)$ with $|\candidates| = \m$, a losing candidate $l \in \candidates\setminus{\borda(G)}$, let $\X=(\candidates,\mu_\X)$ be a dMS for $l \not \in \borda(G)$. 
    According to Theorem~\ref{th:bochara}, exists $K \subseteq \candidates\setminus{\{l\}}$ with $|K| = k$ such that
    
    $$\sum_{c\in K}\sum_{c'\not\in K}\mu(c,c') + k\sum_{c\in\candidates}\mu(c,l) > k\n(\m-1) - \n \frac{k(k-1)}{2}.$$

    We have that $|\X| \geq \displaystyle\sum_{c\in K}\sum_{c'\not\in K}\mu(c,c')$ and $|\X| \geq \displaystyle\sum_{c\in\candidates}\mu(c,l)$.
    Hence,
    \begin{align*}
        & |\X| + k|\X| > k\n(\m-1) - \n \frac{k(k-1)}{2}\\
        \iff & (k+1)|\X| > k\n(\m-1) - \n \frac{k(k-1)}{2}\\
        \iff & |\X| > \frac{1}{k+1}\left(k\n(\m-1) - \n \frac{k(k-1)}{2}\right)\\
        \iff & |\X| > \left(1-\frac{1}{k+1}\right)\left(\n(\m-1) - \n \frac{k-1}{2}\right)\\
        \iff & |\X| > \left(1-\frac{1}{k+1}\right)\left(\n\m - \n \frac{k+1}{2}\right)
    \end{align*}
    
    Let $f$ be the function $x \mapsto \left(1-\frac{1}{x+1}\right)\left(\n\m - \n \frac{x+1}{2}\right)$ defined on $[1;m-1] \to \mathbb{R}$. 
    $f$ is twice differentiable on its domain. 
    $f': x \mapsto \frac{1}{(x+1)^2}\left(\n\m - \n \frac{x+1}{2}\right)-\left(1-\frac{1}{x+1}\right)\frac{\n}{2}$ and 
    $f'': x \mapsto \frac{-2}{(x+1)^3}\left(\n\m - \n \frac{x+1}{2}\right)-\frac{1}{(x+1)^2}\frac{\n}{2}-\frac{1}{(x+1)^2}\frac{\n}{2}=\frac{-2\n\m}{(x+1)^3}$.
    $f''$ is clearly negative on the interval $[1;m-1]$, thus $f$ is concave and minimal at $x=1$ or $x=\m-1$.

    $f(1)=\left(1-\frac{1}{1+1}\right)\left(\n\m - \n \frac{1+1}{2}\right)=\frac{\n(\m-1)}{2}$.

    $f(\m-1)=\left(1-\frac{1}{\m-1+1}\right)\left(\n\m - \n \frac{\m-1+1}{2}\right)=\frac{\n(\m-1)}{2}$.

    Hence, $|\X| > \frac{\n(\m-1)}{2}$.
    Since $|\X|$ is an integer we have $|\X| \geq \left\lceil\frac{\n(\m-1)+1}{2}\right\rceil$.

    Suppose now that $\sigma_{\borda}(l) > \left\lfloor\frac{\n(\m-1)-1}{2}\right\rfloor$. 
    We have that $\sum_{c\in\candidates}\mu_\X(c,l)\leq\sum_{c\in\candidates}\mu(c,l)=\n(\m-1)-\sigma_{\borda}(l)<\n(\m-1) - \left\lfloor\frac{\n(\m-1)-1}{2}\right\rfloor = \left\lceil\frac{\n(\m-1)+1}{2}\right\rceil$.

    Thus, $\sum_{c\in\candidates}\mu_\X(c,l)\leq \left\lceil\frac{\n(\m-1)-1}{2}\right\rceil$, it is not possible to take $K=\candidates\setminus{\{l\}}$ anymore and $|\X| > \left\lceil\frac{\n(\m-1)+1}{2}\right\rceil$.

    If we look at $$\sum_{c\in K}\sum_{c'\not\in K}\mu(c,c') +  k\sum_{c\in\candidates}\mu(c,l)> k\n(\m-1) - \n \frac{k(k-1)}{2}$$
    it is clear that:
    \begin{itemize}
        \item for $c\in K$ $\mu(c,l)$ contributes $k+1$ times in the left hand
        \item for $c,c'\in K\times(\candidates\setminus{\{l\}})$ $\mu(c,c')$ contributes $1$
        \item for $c\not\in K$ $\mu(c,l)$ contributes $k$
        \item for $c,c'\in (\candidates\setminus{K})\times(\candidates\setminus{\{l\}})$ $\mu(c,c')$ contributes $0$.
    \end{itemize}

    Hence, maximizing $\sum_{c\in\candidates}\mu(c,l)$ is never damaging to obtain smallest dMSs and we can take it to its maximal value, $\n(\m-1) - \sigma_{\borda}(l,G)$.

    We have:
    \begin{align*}
        & \sum_{c\in K}\sum_{c'\not\in K}\mu(c,c') +  k\sum_{c\in\candidates}\mu(c,l)> k\n(\m-1) - \n \frac{k(k-1)}{2}\\
        \implies & \sum_{c\in K}\sum_{c'\in \candidates\setminus{(K\cup\{l\})}}\mu(c,c') + \sum_{c\in K}\mu(c,l) + k\sum_{c\in\candidates}\mu(c,l)> k\n(\m-1) - \n \frac{k(k-1)}{2}\\
        \implies & \sum_{c\in K}\sum_{c'\in \candidates\setminus{(K\cup\{l\})}}\mu(c,c') + \sum_{c\in K}\mu(c,l) + k(\n(\m-1) - \sigma_{\borda}(l,G))> k\n(\m-1) - \n \frac{k(k-1)}{2}\\
        \implies & \sum_{c\in K}\sum_{c'\in \candidates\setminus{(K\cup\{l\})}}\mu(c,c')> k\sigma_{\borda}(l,G) - \n \frac{k(k-1)}{2} - \sum_{c\in K}\mu(c,l).
    \end{align*}

    With a MILP solver, we can fin $K^*$ and $\mu^*$ which minimize $\sum_{c\in K}\sum_{c'\in \candidates\setminus{(K\cup\{l\})}}\mu(c,c')$ while satisfying this constraint.

    Finally, we can take $\X=(\candidates,\mu_\X)$ with $\forall c\in K,c'\in \candidates\setminus{(K\cup\{l\}})$, $\mu_\X(c,c')=\mu^*(c,c')$, $\mu_\X(c,l)=\mu(c,l)$ and $\mu_\X$ null elsewhere.
    Hence, $|\X|=\sum_{c\in K^*}\sum_{c'\in \candidates\setminus{(K^*\cup\{l\})}}\mu^*(c,c')+\n(\m-1)-\sigma_{\borda}(l,G)$.
    \qed
\end{proof}


\cosize*


\mmsize*

\begin{proof}
    In the case $\m=2$, ensuring that a strict majority of voters prefers the other candidate to $l$ is clearly necessary and sufficient so we move to the case where $\m\geq 3$.
    
    We start by proving that $(\sigma_{\mm}(l)+1)(\m-3) + \n + 1$ and $(\sigma_{\mm}(l)+1)(\m-2) + \n + 1$ are enough depending on the cases.

    Given $\n$ voters, a complete $\n$-weighted tournament $G =(\candidates,\mu)$ with $|\candidates| = \m$, a losing candidate $l \in \candidates\setminus{\mm(G)}$. Since $l\not\in \mm(G)$, exists $c_0\in \candidates\setminus{\{l\}}$ such that $\sigma_{\mm}(c_0) > \sigma_{\mm}(l,G)$. Additionally, let $c_1 = \argmax_{c\neq l} \mu(c,l)$, by definition $\mu(c_1,l) = \n - \sigma_{\mm}(l,G)$. Let $\X=(\candidates,\mu_\X)$ be a such that for all $c\in \candidates\setminus{\{c_0\}}$, $\mu_\X(c_0,c) = \sigma_{\mm}(l,G) +1$, $\mu_\X(c_1,l) = \n - \sigma_{\mm}(l,G)$ and the rest of $\mu_\X$ is null. Then, for all completion $G'$ of $\X$, $\sigma_{\mm}(c_0,G') = \min_{c\neq c_0} \mu_{G'}(c_0,c) \geq \min_{c\neq c_0} \mu_{\X}(c_0,c) = \sigma_{\mm}(l,G) +1$ and $\sigma_{\mm}(l,G') = \min_{c\neq l} \mu_{G'}(l,c) = \n - \max_{c\neq l} \mu_{G'}(c,l) \geq  \n - \max_{c\neq l} \mu_{\X}(c,l) \geq \n - \mu_\X(c_1,l) = \n - (\n - \sigma_{\mm}(l,G)) = \sigma_{\mm}(l,G)$. Hence, $\sigma_{\mm}(c_0,G') > \sigma_{\mm}(l,G')$ and $l$ is a necessary loser for $\X$ and $\X$ contains a dMS.
    
    If $c_0\neq c_1$, $|X| = (\m-1)(\sigma_{\mm}(l,G) +1) + \n - \sigma_{\mm}(l,G) = (\sigma_{\mm}(l)+1)(\m-2) + \n + 1$ and if $c_0=c_1$, having $\mu_\X(c_0,l)= \n - \sigma_{\mm}(l,G)$ ensures that $\mu_\X(c_0,l)=\sigma_{\mm}(l,G) + 1 $ ($\sigma_{\mm}(l,G) \leq \left\lfloor \frac{n-1}{2}\right\rfloor$ otherwise, $l$ is a Condorcet winner and thus a Maximin winner) so we can save $\sigma_{\mm}(l,G) + 1$ comparisons and $|X| = (\m-2)(\sigma_{\mm}(l,G) +1) + \n - \sigma_{\mm}(l,G) = (\sigma_{\mm}(l)+1)(\m-3) + \n + 1$.

    We now prove the matching lower bounds.
    Given $\n$ voters, a complete $\n$-weighted tournament $G =(\candidates,\mu)$ with $|\candidates| = \m$, a losing candidate $l \in \candidates\setminus{\mm(G)}$, according to Theorem~\ref{th:mmchara}, for $l$ to be the necessary $\mm$-loser of a partial sub-tournament $G'\subseteq G$, $G'$ needs to contain two elements:
    \begin{itemize}
        \item a pairwise comparison in which $l$ is beaten to limit its maximal possible maximin score,
        \item a set of candidates arranged in a tree where each candidate in the tree beats the non adjacent candidates in the tree with a strictly stronger weight than $l$'s possible maximin score.
    \end{itemize}

    Observe that if the tree contains $k\geq 1$ candidates, $k(\m-3)+2$ edges needs to have a stronger weight than $l$'s possible maximin score.
    Each time the maximal possible maximin score of $l$ decreases by one, we can save $k(\m-3)+2\geq 1$ comparisons. 
    Hence, to obtain the smallest dMS, we want the maximal possible maximin score of $l$ to be as low as possible, i.e., fixed to $\sigma_{\mm}(l,G)$.
    Let $c_1 = \argmax_{c\neq l} \mu(c,l)$, we can fix $\mu_\X(c_1,l)=\n-\sigma_{\mm}(l,G)$ to bound the maximal possible maximin score of $l$ to $\sigma_{\mm}(l,G)$ using $\n-\sigma_{\mm}(l,G)$ comparisons.
    
    Additionally, note that $k(\m-3)+2$ increases with $k$ so we want as few candidates in tree as possible. 
    Since $l\not\in\mm(G)$, exists a candidate $c_0\neq l$ such that $\sigma_{\mm}(c_0) > \sigma_{\mm}(l)$. 
    Thus, we can take the tree reduced to the single vertex $c_0$ as $c_0$ beats any other candidate with strictly more than $\sigma_{\mm}(l)$ comparisons.
    This gives us a lower bound of $(\m-1)(\sigma_{\mm}(l)+1)$ pairwise comparisons for this constraint.

    If for all $c\in \candidates$ such that $\sigma_{\mm}(c) > \sigma_{\mm}(l)$,  $\mu(c,l)< \n - \sigma_{\mm}(l)$ both constraints cannot be combined and both sets of pairwise comparisons are disjoint.
    Hence, a lower bound of $\n-\sigma_{\mm}(l,G) + (\m-1)(\sigma_{\mm}(l)+1) = (\sigma_{\mm}(l)+1)(\m-2) + \n + 1$  corresponding to the third case of the theorem.
    In the other case, we can have $c_0=c_1$ and by fixing $\mu_\X(c_1,l)=\n-\sigma_{\mm}(l,G)$, we have $\mu_\X(c_1,l)\geq \sigma_{\mm}(l,G) +1$ without requiring additional comparisons. 
    This allows to save an extra $\sigma_{\mm}(l,G) +1$ which gives a final lower bound of $(\sigma_{\mm}(l)+1)(\m-3) + \n + 1$.

    It naturally follows from the beginning of the proof, that an SdMS can be computed in polynomial time.
    Finally, since the score of a losing candidate cannot be greater than $\left\lfloor \frac{n-1}{2}\right\rfloor$ otherwise, it is a Condorcet winner and thus a Maximin winner, we have that for each SdMS $\X$ for maximin, $(\left\lfloor \frac{n-1}{2}\right\rfloor+1)(\m-2) + \n + 1 = \left\lceil \frac{n+1}{2}\right\rceil(\m-2) + \n + 1$. \qed
\end{proof}

\wucsize*
\begin{proof}
    We first provide upper bounds on the number of pairwise comparisons required in each case by exhibiting partial sub-tournaments where $l$ is a necessary loser in each case.
    
    Given $\n$ voters, a complete $\n$-weighted tournament $G =(\candidates,\mu)$ with $|\candidates| = \m$, a losing candidate $l \in \candidates\setminus{\wuc(G)}$.

    We start with the second case and we show that there always exists a dMS of size $\n(\m-2) + \left\lceil\frac{\n+1}{2}\right\rceil$.
    Since $l$ is a losing candidate in $G$, exists $c_0\in\candidates\setminus{\{l\}}$ such that $c_0$ weighted covers $l$, i.e., for all candidates $c\in\candidates\setminus{\{c_0\}}$, $\mu(c_0,c)\geq\mu(l,c)$.
    Hence, for all candidates $c\in\candidates\setminus{\{c_0,l\}}$, $\mu(c_0,c)\geq \n - \mu(c,l)$ or equivalently, $\mu(c_0,c) + \mu(c,l)\geq \n$.
    Let $\X\subseteq G$ be a partial sub-tournament of $G$ with $\X=(\candidates,\mu_\X)$ and $\mu_\X(c_0,l)=\left\lceil\frac{\n}{2}\right\rceil$, for all candidates $c\in\candidates\setminus{\{c_0,l\}}$, $\mu_X(c_0,c) + \mu_\X(c,l)\geq \n$.
    For each completion $\Y=(\candidates,\mu_\Y)$ of $\X$, $\mu_\Y(c_0,l)\geq\frac{\n}{2}\geq\mu_\Y(l,c_0)$.
    Additionally, for all candidates $c\in\candidates\setminus{\{c_0,l\}}$, $\mu_Y(c_0,c) + \mu_\Y(c,l) \geq \mu_X(c_0,c) + \mu_\X(c,l)\geq \n$.
    Thus, for all candidates $c\in\candidates\setminus{\{c_0,l\}}$, $\mu_Y(c_0,c) \geq \n - \mu_\Y(c,l) = \mu_\Y(l,c)$.
    We have that $l$ is weighted covered by $c_0$ in $\Y$ and $l\in\NL{\wuc}{\X}$.
    Finally, for at least one of the inequalities to be strict we need an additional comparison if $\n$ is even and none if $\n$ is odd as $\mu_\Y(c_0,l)\geq\frac{\n}{2}$ is already strict. 
    We encode this by changing the $\left\lceil\frac{\n}{2}\right\rceil$ into $\left\lceil\frac{\n+1}{2}\right\rceil$
    Since, $|\X|=\n(\m-2) + \left\lceil\frac{\n+1}{2}\right\rceil$, $\n(\m-2) + \left\lceil\frac{\n+1}{2}\right\rceil$ pairwise comparisons are enough.

    We now move on to the first case.
    Suppose $p=\sum_{c\in\candidates\setminus{\{l,c_0,c_1\}}}\mu(c,l)-\n(\m-3)-\left\lfloor\frac{\n}{2}\right\rfloor>0$ where $(c_0,c_1)\in(\candidates\setminus{\{l\}})^2$ maximizes $\sum_{c\in\candidates\setminus{\{l,c_0,c_1\}}}\mu(c,l)$ with $c_0\neq c_1$, $\mu(c_i,l)\geq \frac{\n}{2}$, for all $c\in\candidates\setminus{\{l,c_0,c_1\}}$, $\mu(c_i,c)+\mu(c,l)\geq\n$ for $i\in\{0,1\}$ and at least one of the inequalities is strict.
    Let $\X\subseteq G$ be a partial sub-tournament of $G$ with $\X=(\candidates,\mu_\X)$ and $\mu_\X(c_i,l)=\left\lceil\frac{\n}{2}\right\rceil$, for all $c\in\candidates\setminus{\{l,c_0,c_1\}}$, $\mu_\X(c,l)=\mu(c,l)$ and $\mu_\X(c_i,c)=\n-\mu(c,l)$ for $i\in\{0,1\}$.
    For each completion $\Y=(\candidates,\mu_\Y)$ of $\X$, since $\mu_\Y(c_0,c_1) + \mu_\Y(c_1,c_0)=\n$, $\mu_\Y(c_0,c_1) \geq \left\lceil\frac{\n}{2}\right\rceil$ or $\mu_\Y(c_1,c_0) \geq \left\lceil\frac{\n}{2}\right\rceil$.
    Without loss of generality, suppose $\mu_\Y(c_0,c_1) \geq \left\lceil\frac{\n}{2}\right\rceil$.
    Since $\mu_\X(c_1,l)=\left\lceil\frac{\n}{2}\right\rceil$, $\mu_\Y(l,c_1)=\n-\mu_\Y(c_1,l)\leq \n-\mu_\X(c_1,l) =\n - \left\lceil\frac{\n}{2}\right\rceil =\left\lfloor\frac{\n}{2}\right\rfloor$.
    Hence $\mu_\Y(c_0,c_1) \geq \mu_\Y(l,c_1)$.
    Additionally, $\mu_\Y(c_0,l)\geq \mu_\X(c_0,l)\geq \left\lceil\frac{\n}{2}\right\rceil$ and $\mu_\Y(l,c_0)=\n- \mu_\Y(c_0,l)\leq \n-\left\lceil\frac{\n}{2}\right\rceil \leq \left\lfloor\frac{\n}{2}\right\rfloor$.
    Thus $\mu_\Y(c_0,l) \geq \mu_\Y(l,c_0)$.
    Moreover, for all $c\in\candidates\setminus{\{l,c_0,c_1\}}$, $\mu_\Y(c_0,c)+\mu_\Y(c,l)\geq\mu_\X(c_0,c)+\mu_\X(c,l)=\mu(c,l) + \n - \mu(c,l) =\n$. 
    Thus $\mu_\Y(c_0,c)\geq \n - \mu_\Y(c,l)=\mu_\Y(l,c)$.
    Finally,
    \begin{align*}
        & 2\left\lceil\frac{\n}{2}\right\rceil+\sum_{c\in\candidates\setminus{\{l,c_0,c_1\}}}\mu(c,l) + 2\sum_{c\in\candidates\setminus{\{l,c_0,c_1\}}}(\n-\mu(c,l))\\
        & = 2\left\lceil\frac{\n}{2}\right\rceil+2\sum_{c\in\candidates\setminus{\{l,c_0,c_1\}}}\n - \sum_{c\in\candidates\setminus{\{l,c_0,c_1\}}}(\mu(c,l))\\
        & = 2\left\lceil\frac{\n}{2}\right\rceil+ 2\n(\m-3) - \sum_{c\in\candidates\setminus{\{l,c_0,c_1\}}}(\mu(c,l))\\
        & = \left\lceil\frac{\n}{2}\right\rceil+ \n(\m-3) - p
    \end{align*}

    Additionally, like in the previous case, for at least one of the inequalities to be strict we need an additional comparison if $\n$ is even and none if $\n$ is odd as $\mu_\Y(c_0,l)\geq\frac{\n}{2}$ is already strict. 
    We encode this by changing the $\left\lceil\frac{\n}{2}\right\rceil$ into $\left\lceil\frac{\n+1}{2}\right\rceil$.
    
    \begin{align*}
        |\X| & = \left\lceil\frac{\n+1}{2}\right\rceil+ \n(\m-3) - p < \left\lceil\frac{\n+1}{2}\right\rceil+ \n(\m-3)
    \end{align*}
    Since, $|\X|=\n(\m-3) + \left\lceil\frac{\n+1}{2}\right\rceil-p$, $\n(\m-3) + \left\lceil\frac{\n+1}{2}\right\rceil-p$ pairwise comparisons are enough.

    We now prove the matching lower bounds.
    
    Since dMSs are partial tournaments where $l$ is a necessary loser, we use the characterization provided in Theorem~\ref{th:wucchara} to obtain lower bounds on the riquered number of pairwise comparisons.
    Given $\n$ voters, a complete $\n$-weighted tournament $G =(\candidates,\mu)$ with $|\candidates| = \m$, a losing candidate $l \in \candidates\setminus{\wuc(G)}$.
    According to Theorem~\ref{th:wucchara}, for each dMS $\X=(\candidates,\mu_\X)$ for $l \not \in \wuc(G)$, we have that exists a tree $T=(K,E)$ with $K\subseteq\candidates\setminus{\{l\}}$ such that for each $c\in K$, $\mu_\X(c,l)\geq\frac{\n}{2}$ and $\forall c'\in \candidates\setminus{\{l\}}$, $\forall c \in K$, if if $\{c,c'\}\not\in E$ then $\mu_\X(c,c') + \mu_\X(c',l)\geq\n$ and at least one of the inequalities is strict.
    In particular, $\forall c'\in \candidates\setminus{\{l\}}$, unless $\forall c \in K$, $\{c,c'\}\in E$, i.e, $T$ is a star graph with center $c'$, $\exists c \in K$ such that $\mu_\X(c,c') + \mu_\X(c',l)\geq\n$.

    Note that if $|K|=1$ and we denote $K=\{c\}$, then $T$ is a star graph with center $c$.
    If $|K|=2$ and $K=\{c_0,c_1\}$, then $T$ is both a star graph with center $c_0$ and a star graph with center $c_1$.
    If $|K| \geq 3$, there exists at most on candidate $c$ such that $T$ is a star graph with center $c$ otherwise $T$ would have a cycle.

    Additionally, if $T$ is a star graph with center $c$, we have $c\in K$ and $\mu_\X(c,l)\geq\frac{\n}{2}$.

    Hence, if $|K|=1$ or $|K| \geq 3$, if $T$ is a star graph we have $\m-2$ independent inequalities of the form $\mu_\X(c,c') + \mu_\X(c',l)\geq\n$ and one of the form $\mu_\X(c,l)\geq\frac{\n}{2}$, and at least one has to be strict thus $|\X|\geq (\m-2)\n+\left\lceil\frac{\n+1}{2}\right\rceil$ else we have $\m-1$ independent inequalities of the form $\mu_\X(c,c') + \mu_\X(c',l)\geq\n$ and $|\X|\geq (\m-1)\n\geq(\m-2)\n+\left\lceil\frac{\n+1}{2}\right\rceil$.

    If $|K|=2$ and $K=\{c_0,c_1\}$, our set of constrains is $\mu_\X(c_i,l)\geq \frac{\n}{2}$ and for all $c\in\candidates\setminus{\{l,c_0,c_1\}}$, $\mu_\X(c_i,c)+\mu_\X(c,l)\geq\n$ for $i\in\{0,1\}$ and at least one inequality has to be strict.
    Since $\mu_\X(c,l)$ appears both in $\mu_\X(c_0,c)+\mu_\X(c,l)\geq\n$ and $\mu_\X(c_1,c)+\mu_\X(c,l)\geq\n$, to obtain the smallest dMS, it is clear that we have to take its highest possible value which is $\mu(c,l)$.
    In that case, if we ignore the strict inequality constraint,
    \begin{align*}
        |\X| & = 2\left\lceil\frac{\n}{2}\right\rceil+\sum_{c\in\candidates\setminus{\{l,c_0,c_1\}}}\mu(c,l) + 2\sum_{c\in\candidates\setminus{\{l,c_0,c_1\}}}(\n-\mu(c,l))\\
        & = 2\left\lceil\frac{\n}{2}\right\rceil+ 2\n(\m-3) - \sum_{c\in\candidates\setminus{\{l,c_0,c_1\}}}(\mu(c,l)).
    \end{align*}

    Once again, for at least one of the inequalities to be strict we need an additional comparison we change one $\left\lceil\frac{\n}{2}\right\rceil$ into $\left\lceil\frac{\n+1}{2}\right\rceil$.
    
    If $p=\sum_{c\in\candidates\setminus{\{l,c_0,c_1\}}}\mu(c,l) - \n(\m-3)-\left\lfloor\frac{\n}{2}\right\rfloor>0$ as in the first case of the theorem then 
    \begin{align*}
        |\X| & = \left\lceil\frac{\n}{2}\right\rceil+\left\lceil\frac{\n+1}{2}\right\rceil+ 2\n(\m-3) - \sum_{c\in\candidates\setminus{\{l,c_0,c_1\}}}(\mu(c,l))\\
        & = \left\lceil\frac{\n+1}{2}\right\rceil+ \n(\m-3) - p\\
        & < \left\lceil\frac{\n+1}{2}\right\rceil+ \n(\m-2).
    \end{align*}
    Else taking $|K|=2$ does not help to obtain smaller dMSs and we go back to the general approach with $|K|=1$.

    It naturally follows from the beginning of the proof, that an SdMS can be computed in polynomial time.
    Finally, since the score of a losing candidate cannot be greater than $\left\lfloor \frac{n-1}{2}\right\rfloor$ otherwise, it is a Condorcet winner and thus a Maximin winner, we have that for each SdMS $\X$ for the weighted uncovered set, $|\X|\leq \n(\m-2)+\left\lceil\frac{\n+1}{2}\right\rceil$. \qed
\end{proof}


\ucsize*

\begin{proof}
    In the unweighted case ($\n=1$),
    $\sum_{c\in\candidates\setminus{\{l,c_0,c_1\}}}\mu(c,l) \leq \sum_{c\in\candidates\setminus{\{l,c_0,c_1\}}} 1 \leq \m-3$.
    Hence, $p=\sum_{c\in\candidates\setminus{\{l,c_0,c_1\}}}\mu(c,l) - (\m-3)-\left\lfloor\frac{1}{2}\right\rfloor\leq 0$ and the first case of Theorem~\ref{th:wucsize} never occurs.
    \qed
\end{proof}

\end{document}